\def\eps{\varepsilon}
\newtheorem{theorem}{Theorem}[section]
\newtheorem{remark}{Remark}[section]
\newtheorem{proposition}{Proposition}[section]
\newtheorem{assumption}{Assumption}
\setlist[itemize]{leftmargin=.4cm}
\title{
Preconditioned Langevin Dynamics with Score-Based Generative Models for Infinite-Dimensional Linear Bayesian Inverse Problems}
\author{%
  Lorenzo Baldassari \\
  University of Basel \\
  \texttt{\texttt{ lorenzo.baldassari@unibas.ch}} \\
  \And
  Josselin Garnier\\
  Ecole Polytechnique, IP Paris\\
  \texttt{ josselin.garnier@polytechnique.edu}\\
  \And
  Knut S\o{}lna\\
  University of California Irvine\\
  \texttt{ksolna@uci.edu}\\
  \And
  Maarten V. de Hoop\\
  Rice University\\
  \texttt{mvd2@rice.edu}\\
}
\begin{document}

\maketitle

\begin{abstract}

Designing algorithms for solving high-dimensional Bayesian inverse problems directly in infinite‑dimensional function spaces---where such problems are naturally formulated---is crucial to ensure stability and  convergence  
as the discretization  of the underlying  problem is refined. In this paper, we contribute to this line of work by analyzing a widely used sampler for linear inverse problems: Langevin dynamics driven by score‑based generative models (SGMs) acting as priors, formulated directly in function space. Building on the  theoretical framework for SGMs in Hilbert spaces, we give a rigorous definition of this sampler in the infinite-dimensional setting and derive, for the first time, error estimates that explicitly depend on the approximation error of the score. As a consequence, we obtain sufficient conditions for global convergence in Kullback–Leibler divergence on the underlying function space. Preventing numerical instabilities requires preconditioning of the Langevin algorithm and we prove the existence  and the form of an optimal preconditioner. The preconditioner depends on both the score error and the forward operator and guarantees a uniform convergence rate across all posterior modes. Our analysis applies to both Gaussian and a general class of non‑Gaussian priors. Finally, we present examples that illustrate and validate our theoretical findings.

\end{abstract}

\section{Introduction}

Inverse problems arise in many challenging applications, such as X-ray computed tomography, seismic tomography, inverse heat conduction, and inverse scattering. These problems share a common goal: to estimate unknown parameters from noisy observations or measurements \cite{tarantola2005inverse}. What makes them  difficult is that they are often ill-posed  in the sense of Hadamard \cite{hadamard2003lectures}: they may have multiple solutions, no solutions at all, or solutions that are highly sensitive to small perturbations in the data. A possible approach to address these difficulties is to cast the problem in a probabilistic framework known as Bayesian inference.
In the Bayesian approach, one first specifies a prior distribution that encodes knowledge about the unknown before any data is observed, along with a model for the observational noise. Bayes' rule is then used to update this prior knowledge in light of the measurements, yielding the so-called \emph{posterior distribution}, which describes the distribution of the unknown conditioned on the data. By sampling from the posterior one can extract statistical information and quantify uncertainty in the solution \cite{stuart2010inverse, knapik2011bayesian, dashti2011uncertainty, stuart2014uncertainty}.

A central challenge in applying Bayesian inference to inverse problems is that in many cases---especially those governed by partial differential equations (PDEs)---the unknowns to be estimated are \emph{functions} that lie in a suitable function space, typically an infinite-dimensional Hilbert space. It is therefore crucial to design Bayesian inference algorithms that are both theoretically sound and computationally effective in arbitrarily high dimensions.  A way to achieve this is by lifting these problems to an infinite-dimensional space and designing inference methods directly in that setting. This approach, sometimes referred to as ``\emph{apply-algorithm-then-discretize}''---or, in the context of Bayesian inference, ``\emph{Bayesianize-then-discretize}''---allows for the development of algorithms that are inherently discretization-invariant, as the Bayes formula and algorithms are properly defined on Hilbert spaces \cite{stuart2010inverse, dashti2013bayesian}. In contrast, the opposite approach---``\emph{discretize-then-Bayesianize}''---can lead to several issues, such as instability as the discretization of the underlying problem is refined, or worse, methods that seem stable but whose results are theoretically implausible \cite{cotter2013mcmc, lassas2004can}.

These considerations manifest clearly even in simple scenarios. In Figure \ref{fig:langevin_challenges} we consider two examples involving a vanilla diffusion Langevin sampler. In the first one, we sample from a Gaussian posterior. While the method appears numerically stable and produces samples with seemingly reasonable behavior, a closer inspection shows that the samples carry infinite energy---they do not belong to the infinite-dimensional Hilbert space. That is, the algorithm is producing objects that are not valid functions in the limit of refined discretization. In the second example, we attempt to fix this by choosing a \emph{trace-class} prior, which ensures that samples have finite energy and are well-defined in a Hilbert space. This theoretically-motivated structure, however, comes at a cost: without adjustments, the drift of the vanilla Langevin sampler may diverge at fine scales.

\begin{figure}[h]
    \centering
    \includegraphics[width=\textwidth]{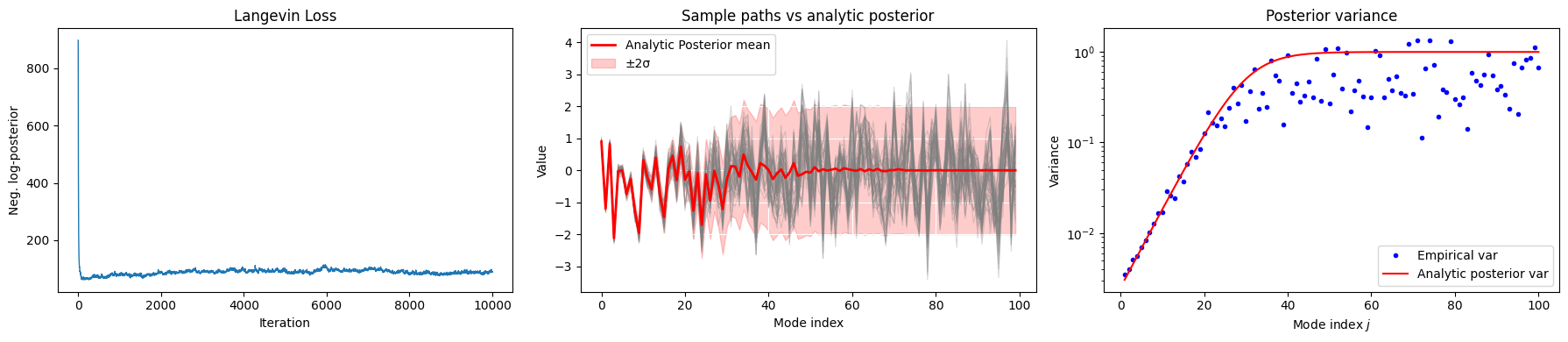}
    \includegraphics[width=\textwidth]{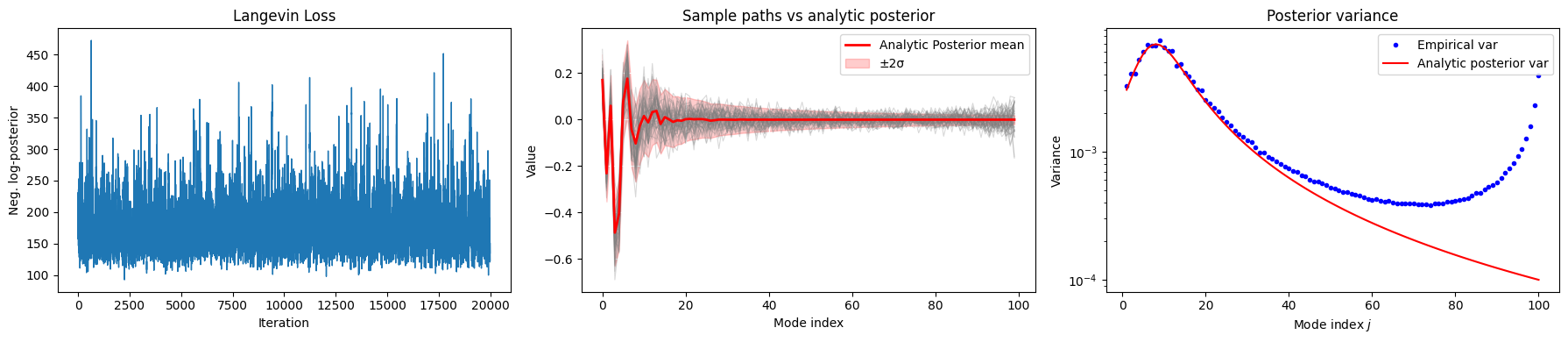}
\caption{We consider the toy linear inverse problem $y_j= A_{jj} X^{(j)}_0 + n_j$ in the basis $(v_j)$ of the Hilbert space $H$, with $A_{jj} = e^{-0.1 j}$ and $n_j \sim \mathcal{N}(0,0.05^2)$, for $j\leq 100$. In the top row, we sample the posterior using an identity prior covariance on $X_0$. The Langevin diffusion seems stable, but the eigenvalues of the posterior covariance do not decay at infinity and therefore the samples do not belong to the Hilbert space $H$. In the bottom row, we use a trace-class covariance prior with diagonal terms $\sim 1/j^2$; the drift of the vanilla Langevin sampler starts diverging at fine scales.}
    \label{fig:langevin_challenges}
\end{figure}

These types of challenges, intrinsic to the infinite-dimensional setting, have long been studied in the Bayesian inverse problems community, but are now receiving renewed attention with the rise of deep learning methods for posterior sampling. One popular class of methods that still lacks a complete theoretical understanding in this context is \emph{score-based generative models} (SGMs), which generate samples from complex distributions by first learning the {(Stein) score}---the gradient of the log-density \cite{liu2016kernelized}---and then using it in various sampling algorithms \citep{vincent2011connection, song2020improved}. For example, \cite{song2019generative} employs the learned score in a Langevin-based sampler, while \cite{song2020score} unified SGMs and diffusion-based methods \cite{sohl2015deep, ho2020denoising} through a stochastic differential equation (SDE) framework, known as {score-based diffusion models}.  
After their introduction, SGMs have been applied successfully to Bayesian inverse problems, either by learning the score conditioned on data \cite{batzolis2021conditional, kawar2021snips, jalal2021robust}, or by using the score of the prior distribution---the \emph{unconditional score model}---within Langevin-type samplers. Crucially, with a few exceptions \cite{baldassari2024conditional, baldassari2024taming, hosseini2023conditional, hagemann2023multilevel}, these approaches assume that the posterior is supported in a finite-dimensional space, leaving the challenges of infinite dimensions to heuristics and ad hoc solutions.
 
 In this work, we present a detailed analysis of SGMs for Bayesian inference of linear inverse problems, going beyond the common assumption that the posterior is supported on a finite-dimensional space. We focus on a widely used posterior sampling technique that combines SGMs---used as powerful learned priors to capture complex features---with a Langevin-based sampler \cite{feng2023score, sun2023provable, xu2024provably}. Lifting the problem directly to function spaces is not a mere technicality: we show that, to provably sample the posterior, the Langevin diffusion must be modified by a preconditioning operator $C$ acting on the Hilbert space. This preconditioner is not an ad hoc fix but rather 
 built into the fabric of the infinite-dimensional setting: it first appears in the forward diffusion process \eqref{eq:forward-diffusion} whose time-reversal is used to learn the prior, and must then be carried through into the Langevin sampler to ensure convergence to the correct posterior (Section \ref{sec:gaussian}). Crucially, $C$ cannot be the identity: for the time-reversed diffusion to remain Hilbert-space-valued, 
 $C$ must be trace class. Setting $C=I$ leads to the same theoretical and numerical issues as highlighted in Figure \ref{fig:langevin_challenges} above. 
 
The importance of preconditioning in function spaces has been well established in the context of posterior sampling \cite{stuart2004conditional, hairer2005analysis, hairer2007analysis, stuart2010inverse, cotter2013mcmc, beskos2017geometric}, but its implications have not yet  fully explored for infinite-dimensional SGMs. In this setting, we characterize the interplay between the preconditioner $C$, the trace-class prior, the score approximation error, and the linear forward map of the inverse problem. In particular, we analyze the impact of the score approximation error at small times---where the score is learned in practice---and identify a \emph{theoretically optimal preconditioner} that ensures uniform convergence rates across posterior modes (Section \ref{sec:preconditioning}). We carry out the analysis by focusing on two cases: a Gaussian prior measure, and a more general class of priors which are absolutely continuous with respect to a Gaussian measure (Section \ref{sec:non-gaussian}). Illustrations are provided in Section \ref{sec:illustrations}.

{\bf Related Work. \hspace{0.5mm}} 
%
%
 There exists a large body of literature on infinite-dimensional MCMC algorithms \citep{stuart2004conditional, hairer2005analysis, hairer2007analysis, beskos2017geometric, wallin2018infinite, stuart2010inverse, durmus2019high, durmus2017nonasymptotic, dalalyan2017theoretical, hairer2014spectral, cotter2013mcmc, cui2016dimension, cui2024multilevel, beskos2018multilevel, morzfeld2019localization, beskos2008mcmc}, which include a variety of preconditioning strategies for posterior sampling. However, these works precede the recent wave of papers on SGMs and therefore do not address the central focus of our analysis: the interplay between the score approximation error, the preconditioning operator, the trace-class prior, and the sampler convergence, which we study in detail in both the Gaussian and non-Gaussian settings. 

Closest to our work are the papers that use SGMs for posterior sampling, such as \cite{feng2023score, sun2023provable, xu2024provably, baldassari2024taming}, which employ SGMs as learned priors in a Langevin-type diffusion algorithm. Among these works, the theoretical analysis of \cite{sun2023provable} is the most directly related to ours. However, there are key differences. \cite{sun2023provable} analyze Langevin dynamics with SGMs for posterior sampling in finite dimensions, as their results provide convergence error estimates that explicitly depend on the score approximation error but diverge as the dimension of the problem increases. In contrast, our error analysis, since it is formulated directly in infinite dimensions, provides conditions to ensure global boundedness (Theorem \ref{thm:error-analysis}). Moreover, the finite-dimensional setting of \cite{sun2023provable} does not address the role of preconditioning, which becomes essential in infinite dimensions. 
Other related works include \cite{ma2022accelerating, ma2025preconditioned}
which investigate preconditioning in Langevin dynamics with SGMs.  However, these analyses are also finite-dimensional and do not account for the score approximation error. As a result, they do not capture the critical role of preconditioning, which---as we show in Section \ref{sec:preconditioning}---becomes crucial in function spaces.

As we have pointed out several times, the learned score plays a key role in our analysis. Among the theoretical frameworks defining 
SGMs in infinite dimensions \cite{franzese2024continuous, franzese2025generative, lim2023score, kerrigan2022diffusion, lim2024score, hagemann2023multilevel, bond2023infty}, we follow those of \cite{pidstrigach2024infinite, baldassari2024conditional} for continuous-time diffusions. An important contribution of our work is to show that the convergence bound depends explicitly on the accuracy of the approximated score, and that controlling this error is key to designing a preconditioner that ensures convergence in function spaces (Theorem~\ref{thm:preconditioner}).

Finally, we note that \cite{baldassari2024taming} also explores the role of preconditioning to ensure convergence in infinite dimensions in the context of SGMs. Their analysis is conducted in a more complex setting---nonlinear inverse problems. Their argument builds on the proof of \cite{sun2023provable}, but the difficulties of the nonlinear setting prevent them from identifying an optimal preconditioner. In contrast, our work takes full advantage of the linear setting, where the distributions at play admit explicit formulas. This allows us to derive detailed error estimates in the small diffusion time regime, where the score is typically learned, and discuss the impact of the score approximation error on posterior sampling---including the effects of preconditioning on the bias error. Furthermore, their analysis focuses only on Gaussian priors, while we generalize and consider non-Gaussian priors (Section \ref{sec:non-gaussian}).

\section{Langevin Posterior Sampling with Score-Based Generative Priors}\label{sec:langevin-SGM}
We work in the setting of a linear Bayesian inverse problem formulated in infinite dimension. Let $H$ be a separable Hilbert space with inner product $\langle \cdot \,,  \cdot \rangle$, and let $C,C_\mu:H\to H$ be trace-class,  positive-definite, symmetric
covariance operators. The unknown quantity of interest is an $H$-valued random variable $X_0 \sim \mu$, where the prior measure $\mu$ is assumed to be absolutely continuous with respect to a Gaussian reference measure $\mathcal{N}(0,C_\mu)$, with density
\begin{equation}
\frac{d\mu}{d{\cal N}(0,C_\mu)}(X) \propto \exp \big(-\Phi(X) \big).
\label{def:density-prior}
\end{equation}
The observations $y\in \mathbb{R}^N$ are modeled as
\[
y = A X_0 + n,
\]
where $A:H\to \mathbb{R}^N$ is a linear operator, and $n \sim \mathcal{N}(0,\sigma^2I_N)$ is Gaussian observational noise independent of $X_0$. Since we consider an observational model corresponding to observing a finite-dimensional subspace of $H$, there exists an orthonormal basis $(v_j)$ of $H$ such that $A v_j = 0$ for all $j>N$. Let $(e_j)$ denote the standard basis of $\mathbb{R}^N$. Then the observation model can be written as 
$y_i= \sum_{j=1}^N A_{ij} X_0^{(j)} + n_i$, where $A_{ij} = \langle e_i, A\,v_j \rangle$, $y_i = \langle y,e_i\rangle$,  $X^{(j)}_0 = \langle X_0 ,v_j\rangle $, and $n_i = \langle n,e_i\rangle$. 

The posterior distribution $\pi_y$ of $X_0$ conditioned on the observations $y$ is absolutely continuous with respect to $\mathcal{N}(0,C_\mu)$:
\begin{equation}
\frac{d \pi_y}{d \mathcal{N} (0,C_\mu)} (X) \propto \exp\Big(- \Phi(X) - \frac{1}{2\sigma^2} \|AX - y\|^2\Big) .
\label{eq:posterior}
\end{equation}
The goal of infinite-dimensional Bayesian inference is to design sampling methods for $\pi_y$ whose performance remains stable as the underlying discretization is refined. To this end, we study a widely used sampler---a Langevin-type diffusion driven by score-based generative priors---this time formulated directly in infinite dimensions rather than in the usual finite-dimensional setting. In particular, we consider the continuous-time SDE
\begin{equation}
dX_t =  S_\theta (X_t, \tau;\, \mu) dt +C \nabla_X \log \rho (y-A X_t) dt +\sqrt{2 C} dW_t,
\label{eq:Langevin-SDE}
\end{equation}
where $\rho$ is the noise density, $C$ acts as a preconditioner, $\nabla_X$ denotes the Fr\'echet derivative with respect to $X$, $W_t$ is a Wiener process on $H$, and $S_\theta(X_t,\tau;\, \mu)$ is a neural network approximation of the score function
\begin{equation}
S (X, \tau; \, \mu)= -(1-e^{-\tau})^{-1} (X-e^{-\tau/2}\mathbb{E}[X_0|X_\tau=X]),
\label{def:score}
\end{equation}
which corresponds to the drift term in the time-reversed SDE of the Hilbert-space-valued forward diffusion
\begin{equation}
dX_\tau = -\frac{1}{2}X_\tau d\tau + \sqrt{C} dW_\tau, \qquad X_0 \sim \mu.
\label{eq:forward-diffusion}
\end{equation}
There are two important aspects to note here. First, both the Langevin SDE \eqref{eq:Langevin-SDE} and the forward diffusion \eqref{eq:forward-diffusion} are driven by a $C$-Wiener process, where $C$ is trace-class, which is crucial for ensuring that the samples are supported on the Hilbert space. Most of the technical difficulties in infinite dimensions arise from this. Second, although the score is often expressed as $\nabla \log p_\tau$ in finite-dimensional settings, the density $p_\tau$ is not defined in infinite dimensions, since a Lebesgue reference measure does not exist. For this reason, in the following we adopt the conditional expectation representation of the score---or,  more precisely, an equivalent formulation derived from it, as stated in the next proposition, which was first proved in \cite{pidstrigach2024infinite}.
\begin{proposition}
The score \eqref{def:score} can be written as
    \begin{equation}
    S(X, \tau;\,\mu) = - e^{\tau/2} \mathbb{E} [C (C_\mu C_\tau^{-1})^{-1} \nabla \Phi(X_0) \mid X_\tau = X] - C C_\tau^{-1}X,
    \label{def:score-non-gaussian}
    \end{equation}
    where $C_\tau = e^{-\tau} C_\mu + (1-e^{-\tau})C$.
\end{proposition}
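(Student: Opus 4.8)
The aim is to convert the conditional‑expectation form \eqref{def:score} of the score into one that exposes the prior structure \eqref{def:density-prior}, and the natural route has four steps: (i) identify the joint law of $(X_0,X_\tau)$ under the Gaussian reference $\mathcal N(0,C_\mu)$; (ii) use \eqref{def:density-prior} as a change of measure to write $\mathbb E[X_0\mid X_\tau=X]$ as a reweighted Gaussian conditional expectation; (iii) apply Gaussian integration by parts to move the $e^{-\Phi}$ weight onto a $\nabla\Phi$ term; (iv) collapse the resulting operators using $C_\tau-e^{-\tau}C_\mu=(1-e^{-\tau})C$. For step (i), integrating the linear SDE \eqref{eq:forward-diffusion} gives $X_\tau=e^{-\tau/2}X_0+\int_0^\tau e^{-(\tau-s)/2}\sqrt C\,dW_s$, so conditionally on $X_0$ one has $X_\tau\sim\mathcal N\!\big(e^{-\tau/2}X_0,\,(1-e^{-\tau})C\big)$; when $X_0\sim\mathcal N(0,C_\mu)$ the pair $(X_0,X_\tau)$ is jointly Gaussian with $X_\tau\sim\mathcal N(0,C_\tau)$ and cross‑covariance $e^{-\tau/2}C_\mu$, whence the conditional law of $X_0$ given $X_\tau=X$ is Gaussian, $\mathcal N(m_X,\Sigma)$, with $m_X=e^{-\tau/2}C_\mu C_\tau^{-1}X$ and a conditional covariance $\Sigma$ independent of $X$ (equivalently, $m_X$ and $\Sigma$ are obtained by completing the square in $x_0$ in the formal joint density proportional to $e^{-\Phi(x_0)}\exp(-\tfrac12\langle x_0,C_\mu^{-1}x_0\rangle)\exp(-\tfrac1{2(1-e^{-\tau})}\langle X-e^{-\tau/2}x_0,\,C^{-1}(X-e^{-\tau/2}x_0)\rangle)$; the operator products involved make sense because $C_\tau\succeq e^{-\tau}C_\mu$).

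For steps (ii)--(iii): the joint law of $(X_0,X_\tau)$ with $X_0\sim\mu$ is absolutely continuous with respect to the joint law with $X_0\sim\mathcal N(0,C_\mu)$, with Radon--Nikodym derivative proportional to $e^{-\Phi(X_0)}$, a function of the first coordinate only, so conditioning commutes with this reweighting and
\begin{equation}
\mathbb E[X_0\mid X_\tau=X]=\frac{\mathbb E_{\mathcal N(0,C_\mu)}\!\big[X_0\,e^{-\Phi(X_0)}\mid X_\tau=X\big]}{\mathbb E_{\mathcal N(0,C_\mu)}\!\big[e^{-\Phi(X_0)}\mid X_\tau=X\big]}.
\label{eq:plan-reweight}
\end{equation}
Applying Gaussian integration by parts on $H$ with respect to $\mathcal N(m_X,\Sigma)$, namely $\mathbb E[(Z-m_X)F(Z)]=\Sigma\,\mathbb E[\nabla F(Z)]$, to $F=e^{-\Phi}$ (so $\nabla F=-e^{-\Phi}\nabla\Phi$), and dividing numerator and denominator of \eqref{eq:plan-reweight} by $\mathbb E_{\mathcal N(0,C_\mu)}[e^{-\Phi(X_0)}\mid X_\tau=X]$, turns \eqref{eq:plan-reweight} into
\begin{equation}
\mathbb E[X_0\mid X_\tau=X]=e^{-\tau/2}C_\mu C_\tau^{-1}X-\Sigma\,\mathbb E\!\big[\nabla\Phi(X_0)\mid X_\tau=X\big].
\label{eq:plan-key}
\end{equation}
Substituting \eqref{eq:plan-key} into \eqref{def:score} and using $I-e^{-\tau}C_\mu C_\tau^{-1}=(C_\tau-e^{-\tau}C_\mu)C_\tau^{-1}=(1-e^{-\tau})C C_\tau^{-1}$ collapses the part of \eqref{def:score} linear in $X$ to $-C C_\tau^{-1}X$; combining the leftover scalar prefactor $(1-e^{-\tau})^{-1}e^{-\tau/2}$ with $\Sigma$ and rearranging the operator product yields the $\nabla\Phi$ term, producing \eqref{def:score-non-gaussian}. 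Since sign and $e^{\pm\tau/2}$‑scaling slips are easy in this last bookkeeping step---the excerpt itself notes that \cite{pidstrigach2023infinite} carried the wrong sign---I would write it out in full and double-check the scalar factor and the operator ordering; as a sanity check, $\Phi\equiv 0$ returns $S(X,\tau;\mu)=-C C_\tau^{-1}X$, the Gaussian-prior formula.

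\emph{Main obstacle.} The algebra is light; the work is the infinite-dimensional bookkeeping, and the two delicate points are both analytic rather than computational. First, the Gaussian conditioning in step (i): the conditional law $\mathcal N(m_X,\Sigma)$ of $X_0$ given the \emph{$H$-valued} variable $X_\tau$ must be justified by disintegration of Gaussian measures on Hilbert space, which is legitimate here precisely because $C_\tau\succeq e^{-\tau}C_\mu$ keeps $C_\mu C_\tau^{-1}$ bounded (for a.e.\ realization of $X_\tau$) and $\Sigma$ trace-class. Second, the validity of Gaussian integration by parts in step (iii), which requires $\Phi$ to be Fréchet-differentiable with $\nabla\Phi$ in the appropriate Malliavin--Sobolev class relative to $\mathcal N(0,C_\mu)$ so that \eqref{eq:plan-key} is a genuine identity of $H$-valued random variables. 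One must also keep $C$ and $C_\mu$ in a fixed, symmetric operator order throughout, since they need not commute. I expect these---not any single computation---to be the crux.
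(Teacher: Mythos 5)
Your strategy (integrate the OU dynamics \eqref{eq:forward-diffusion}, disintegrate the Gaussian reference pair $(X_0,X_\tau)$, reweight by $e^{-\Phi}$, apply Gaussian integration by parts, then use $C_\tau-e^{-\tau}C_\mu=(1-e^{-\tau})C$) is the right derivation. Note that the paper gives no proof of this proposition (it is credited to \cite{pidstrigach2023infinite}); the closest in-paper argument is the mode-wise proof of Proposition \ref{prop:score-phi} in Appendix \ref{app:proof-score-phi}, which follows the same route but stops at the log-conditional-expectation form and never performs the integration-by-parts step. Your analytic caveats are well placed, and your ordering worry largely dissolves for the conditional covariance: $\Sigma=C_\mu-e^{-\tau}C_\mu C_\tau^{-1}C_\mu=(1-e^{-\tau})C_\mu C_\tau^{-1}C=(1-e^{-\tau})C C_\tau^{-1}C_\mu$ holds with no commutativity assumption.

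The problem sits exactly in the final bookkeeping step you deferred. Carrying out your own steps (i)--(iv) yields $S(X,\tau;\mu)=-CC_\tau^{-1}X-e^{-\tau/2}\,C C_\tau^{-1}C_\mu\,\mathbb{E}[\nabla\Phi(X_0)\mid X_\tau=X]$, i.e.\ prefactor $e^{-\tau/2}C(C_\mu^{-1}C_\tau)^{-1}$, whereas \eqref{def:score-non-gaussian} as printed has $e^{\tau/2}C(C_\mu C_\tau^{-1})^{-1}=e^{\tau/2}CC_\tau C_\mu^{-1}$; the two coincide only if $C_\tau=e^{-\tau/2}C_\mu$, which fails in general. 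Your expression is the correct one: mode by mode it reduces, after integration by parts, to Proposition \ref{prop:score-phi} with coefficient $e^{-\tau/2}\lambda_j\mu_j/(e^{-\tau}\mu_j+(1-e^{-\tau})\lambda_j)$, and it passes the scalar sanity check with quadratic $\Phi$ (prior $\mathcal{N}(0,m)$), returning $S=-\lambda x/(e^{-\tau}m+(1-e^{-\tau})\lambda)$, while the printed prefactor does not. So either treat the statement as containing typos (the sign of the exponent and the order inside the inverse) and prove the corrected identity, or accept that your claim that the rearrangement ``produces \eqref{def:score-non-gaussian}'' is precisely the step that cannot be carried out: as planned, the proof does not close onto the formula as literally stated.
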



The idea behind samplers like \eqref{eq:Langevin-SDE} is simple yet powerful. By training $S_\theta(X,\tau;\mu)$ to approximate $S (X, \tau; \, \mu)$, one can effectively learn potentially complex priors $\mu$---since, once $S_\theta(X,\tau;\mu)$ is known, one can sample from $\mu$ by simulating the backward-in-time dynamics---and then incorporate such priors within a Langevin sampling scheme. What remains less understood, however, is how this approach extends to the infinite-dimensional setting, particularly in relation to the error introduced by approximating the score and whether the sampler remains stable. 
In the sections that follow, we address this gap---we prove convergence of \eqref{eq:Langevin-SDE} to the correct posterior and derive error bounds, along with conditions ensuring a globally bounded convergence error. We also elucidate the role of the preconditioner $C$. 
Our analysis is divided into two parts: one addressing the case of Gaussian priors, and the other the non-Gaussian case.

\section{Error Analysis in the Gaussian Setting}\label{sec:gaussian}
We begin our analysis of the continous-time Langevin SDE \eqref{eq:Langevin-SDE} in the infinite-dimensional setting by examining the case where the prior of $X_0$ is a Gaussian measure. While this case may seem to defeat  the purpose of using a score-based generative model to learn a simple prior, it  provides illuminating insights, as it allows us to detail the impact of the score approximation error on the stationary distribution of \eqref{eq:Langevin-SDE}, offers a clear interpretation of the infinite-dimensional difficulties, and paves the way for the derivation of an explicit form of the optimal preconditioner (Section \ref{sec:preconditioning}).

We assume in this section that $\Phi=0$. The posterior \eqref{eq:posterior} is Gaussian:
\[
\pi_y  = \mathcal{N}\left(\left[ C_\mu^{-1} +  \sigma^{-2}A^\top A\right]^{-1}  \sigma^{-2} A^\top y,  \left[ C_\mu^{-1} +  \sigma^{-2}A^\top A\right]^{-1}\right).
\]
We also assume that both $C$  and $C_\mu$ are diagonal in the basis $(v_j)$, with eigenvalues $(\lambda_j)$ and $(\mu_j)$, respectively. We can make a few remarks:
\vspace*{-0.6em}
\begin{itemize}
\item In the $(v_j)$ basis, the posterior decomposes into a Gaussian $\pi_y^N$ on the span of the first $N$ observed modes and a product of marginal Gaussian over the unobserved modes $j>N$. 
\item For the observed modes---i.e., those $j\leq N$ influenced by the data through the forward operator $A$---the distribution is
\[
\hspace*{-0.1in}
\pi_y^{N} = \mathcal{N} \left( \! \left[ 
C_{\mu,N}^{-1} 
+  \sigma^{-2} A_N^\top A_N \right]^{-1} \!\! \sigma^{-2}  A_N^\top y ,  \left[ 
C_{\mu,N}^{-1}
+  \sigma^{-2}  A_N^\top A_N \right]^{-1}\right),
\mbox { with }
C_{\mu,N}\! = \!\! \operatorname*{Diag}\limits_{1\leq j \leq N}\left(\mu_j\right) .
\]
\item For the unobserved modes $j> N$, which lie in the nullspace of $A$, the posterior coincides with the prior: $\pi_y^{(j)} = \mathcal{N} \left(0 ,  \mu_j \right)$.
\item The score function is $ S(X,\tau; \, \mu) = - \sum_j s_j(\tau;\mu) X^{(j)} v_j$,
with $s_j(\tau;\mu)=\frac{\lambda_j}{e^{-\tau} \mu_j+(1-e^{-\tau}) \lambda_j}$.
\end{itemize}
\vspace*{-0.6em}
The block diagonalization of the system by $(v_j)$ justifies the following assumption on the form of the score approximation error. 

\begin{assumption}\label{assumption:score-error}
We consider an  approximate score $S_\theta(X,\tau; \, \mu)$ such that 
\[
\big\langle S(X,\tau;\, \mu)-S_\theta(X,\tau;\, \mu), \; v_j\big\rangle = \eps_j^a(\tau) X^{(j)}  + \eps^b_j(\tau).
\]    
\end{assumption}
Define $X^N= \sum_{j=1}^N X^{(j)} v_j$ and similarly let $W_t^N$ denote the projection of $W_t$ onto the first $N$ modes. By Assumption \ref{assumption:score-error}, for the observed modes $j\leq N$, the preconditioned Langevin dynamics \eqref{eq:Langevin-SDE} become
\begin{align*}
dX_t^N =& -\left[ \operatorname*{Diag}\limits_{1\leq j \leq N}\left( s_j(\tau;\mu)\right) +  \operatorname*{Diag}\limits_{1\leq j \leq N}\left(\eps^a_j(\tau)\right) +   
C_N
\sigma^{-2} A_N^\top A_N \right] X_t^N dt\\
&
+ \left[  
C_N
\sigma^{-2} A_N^\top y -  \operatorname*{Diag}\limits_{1\leq j \leq N}\left(\eps_j^b(\tau)\right) \right] dt +
 \sqrt{2C_N} dW^N_t, 
\end{align*}
with $ C_N= \operatorname*{Diag}\limits_{1\leq j \leq N}\left(\lambda_j\right)$. For the unobserved modes $j>N$, we have
\begin{align*}
dX_t^{(j)} =& -\left[ s_j(\tau;\mu) +   \eps^a_j(\tau)\right] X^{(j)}_t dt
-    \eps_j^b(\tau) dt +\sqrt{2 
 \lambda_j} dW^{(j)}_t. 
\end{align*}
We are now ready to derive the stationary distribution of the continuous-time SDE \eqref{eq:Langevin-SDE}. The following proposition makes explicit the dependence on the score approximation error; its proof is included in Appendix \ref{appendix:prop-3-1}.
\begin{proposition}\label{prop:stationary-langevin}
The stationary distribution $\check{\pi}_y$ of the preconditioned Langevin diffusion with approximate score in the drift term is the Gaussian measure 
with mean $\check{m}(\tau)=(\check{m}^N(\tau) , (\check{m}_j(\tau))_{j\geq N+1})$
and covariance $\check{v}(\tau)=\check{v}^N(\tau) \oplus \operatorname*{Diag}\limits_{ j \geq N+1} \left(\check{v}_j(\tau) \right) $.
For the observed modes $j\leq N$, we have
\begin{align}
\label{expres:vn}
&\check{v}^N (\tau) = \left[ 
C_{N}^{-1} \operatorname*{Diag}\limits_{1\leq j \leq N}\left( s_j(\tau;\mu) \right) + \sigma^{-2} A_N^\top A_N + C_N^{-1} \operatorname*{Diag}\limits_{1\leq j \leq N} \left(  \eps_j^a(\tau)\right)  \right]^{-1},
\\
&
\check{m}^N(\tau)  = \check{v}^N(\tau)  \left[ \sigma^{-2} A_N^\top y  - C_N^{-1} \operatorname*{Diag}\limits_{1\leq j \leq N}\left(   \eps_j^b(\tau)\right) \right]
,
\label{expres:mn}
\end{align}
while for the unobserved modes $j>N$, we have
\begin{equation}
\label{expres:vjmj}
\check{v}_j(\tau) = \left[ 
 \lambda_j^{-1}s_j(\tau;\mu)
+ \lambda_j^{-1} \eps_j^a(\tau) \right]^{-1},
\qquad \check{m}_j(\tau) = -\check{v}_j(\tau)  \lambda_j^{-1} \eps_j^b(\tau).
\end{equation}
\end{proposition}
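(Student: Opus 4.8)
The plan is to exploit the block-diagonal structure already exhibited in the basis $(v_j)$: the preconditioned Langevin dynamics \eqref{eq:Langevin-SDE} split into one $N$-dimensional linear SDE for the observed modes $X_t^N$ and a countable family of decoupled scalar linear SDEs for the unobserved modes $X_t^{(j)}$, $j>N$, each driven by an independent Wiener component. All of these are Ornstein–Uhlenbeck-type equations, so I would rely on the standard characterization of the invariant measure of a stable linear SDE $dZ_t = (b - MZ_t)\,dt + \sqrt{2D}\,dW_t$: it is the Gaussian $\mathcal{N}(M^{-1}b,\,Q)$, where $M^{-1}b$ is the unique zero of the drift and $Q$ is the unique symmetric positive solution of the Lyapunov equation $MQ + QM^\top = 2D$. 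Taking the product (direct sum) of the stationary laws of the independent blocks then produces a Gaussian on $H$ of the claimed form, provided the resulting covariance is trace class.

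For the observed block I would rewrite the displayed SDE as $dX_t^N = (b_N - M_N X_t^N)\,dt + \sqrt{2C_N}\,dW_t^N$ with $M_N = \operatorname*{Diag}\limits_{1\le j\le N}\!\big(s_j(\tau;\mu)+\eps_j^a(\tau)\big) + C_N\,\sigma^{-2}A_N^\top A_N$ and $b_N = C_N\,\sigma^{-2}A_N^\top y - \operatorname*{Diag}\limits_{1\le j\le N}\!\big(\eps_j^b(\tau)\big)$. The key algebraic point is that, because $C_N$ is diagonal and $A_N^\top A_N$ is symmetric, the preconditioned drift $C_N^{-1}M_N = \operatorname*{Diag}\limits_{1\le j\le N}\!\big(\lambda_j^{-1}(s_j(\tau;\mu)+\eps_j^a(\tau))\big) + \sigma^{-2}A_N^\top A_N$ is symmetric, equivalently $M_N C_N = C_N M_N^\top$. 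Hence the candidate $Q = M_N^{-1}C_N = (C_N^{-1}M_N)^{-1}$ satisfies $M_N Q + Q M_N^\top = C_N + M_N^{-1}C_N M_N^\top = C_N + M_N^{-1}M_N C_N = 2C_N$, so $Q = \check{v}^N(\tau)$ is exactly \eqref{expres:vn}; moreover the stationary mean is $M_N^{-1}b_N = (M_N^{-1}C_N)\big(C_N^{-1}b_N\big) = \check{v}^N(\tau)\big[\sigma^{-2}A_N^\top y - C_N^{-1}\operatorname*{Diag}\limits_{1\le j\le N}(\eps_j^b(\tau))\big]$, which is \eqref{expres:mn}. For each unobserved mode the equation is the scalar OU process $dX_t^{(j)} = \big(-\eps_j^b(\tau) - (s_j(\tau;\mu)+\eps_j^a(\tau))X_t^{(j)}\big)\,dt + \sqrt{2\lambda_j}\,dW_t^{(j)}$, whose invariant law has mean $-\eps_j^b(\tau)/(s_j(\tau;\mu)+\eps_j^a(\tau))$ and variance $\lambda_j/(s_j(\tau;\mu)+\eps_j^a(\tau))$; rewriting both in inverse form yields \eqref{expres:vjmj}.

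Two issues require care and I expect the second to be the only genuinely infinite-dimensional obstacle. First, existence and uniqueness of the invariant measure need the drift to be stable, i.e. $C_N^{-1}M_N \succ 0$ and $s_j(\tau;\mu)+\eps_j^a(\tau)>0$ for $j>N$; since $s_j(\tau;\mu)>0$ whenever $\lambda_j,\mu_j>0$, this holds once the score error is small enough (or, more weakly, as long as these positivity conditions persist), and I would record it as the standing hypothesis under which the proposition is read—note also that $M_N$, being a product of the two positive operators $C_N$ and $C_N^{-1}M_N$, then has spectrum in the open right half-plane, so the $N$-dimensional OU process indeed has a unique Gaussian equilibrium. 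Second, one must verify that $\check{v}(\tau)$ is trace class, so that $\check{\pi}_y$ is a genuine Gaussian measure on $H$: using the identity $\lambda_j/s_j(\tau;\mu) = e^{-\tau}\mu_j + (1-e^{-\tau})\lambda_j$ together with the summability of $(\mu_j)$ and $(\lambda_j)$ (both $C$ and $C_\mu$ are trace class), one obtains $\sum_{j>N}\check{v}_j(\tau)<\infty$ as long as the $\eps_j^a(\tau)$ do not destroy positivity—this is exactly where the trace-class preconditioner $C$ is needed, and the choice $C=I$ would fail at precisely this step. The remainder is the elementary Lyapunov computation above, whose crux is the self-adjointness of $C_N^{-1}M_N$, plus routine bookkeeping to assemble the independent blocks into the direct-sum Gaussian.
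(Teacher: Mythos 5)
Your proof is correct and follows essentially the same route as the paper's: decompose into the $N$-dimensional OU block for observed modes and scalar OU processes for unobserved modes, solve the Lyapunov equation using the symmetry of $C_N^{-1}M_N$ to get the covariance \eqref{expres:vn} and the drift zero to get the mean \eqref{expres:mn}, and read off the scalar stationary laws for \eqref{expres:vjmj}, under the same positivity condition $s_j(\tau;\mu)+\eps_j^a(\tau)>0$ that the paper records. Your additional remarks on trace-class summability of $\check{v}(\tau)$ are a sensible supplement not spelled out in the paper's proof, but the core argument is the same.
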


Based on Proposition \ref{prop:stationary-langevin}, we make a few comments:

\begin{itemize} 
\item If we have access to the perfect score, that is, $\eps_j^a =\eps_j^b=0$ for all $j$, then
\begin{align*}
\check{m} (\tau) &=\Big[ C_{\tau}^{-1} +\sigma^{-2} A^\top A \Big]^{-1}
\sigma^{-2} A^\top y   
\stackrel{\tau \to 0}{\to}
\left[ C_\mu^{-1}+\sigma^{-2}A^\top A \right]^{-1} \sigma^{-2} A^\top y,   \\
\check{v} (\tau) &= \Big[ C_\tau^{-1} +\sigma^{-2} A^\top A \Big]^{-1} 
\stackrel{\tau \to 0}{\to}
\left[ C_\mu^{-1} + \sigma^{-2}A^\top A \right]^{-1}.
\end{align*}
That is, we recover the posterior $\pi_y$ given the data. It does not depend on the preconditioner $C$.
\item The error $\eps_j^a$ can have an impact on the stationary distribution of $X^{(j)}_t$, but as long as it is smaller than $\lambda_j/\mu_j$ (i.e., the relative error in the approximation of the score is small), the impact is small. 
\item
The error $\eps_j^b$ can induce a bias. The bias can be large because the mean of the $j$-th mode marginal of $\check{\pi}_y^{(j)}$ is amplified by $\lambda_j^{-1}$. The preconditioner cannot prevent from this bias.
\end{itemize}

We can make our analysis more quantitative by presenting mode-by-mode and global convergence error estimates for the preconditioned Langevin sampler in the Gaussian setting.
To simplify the discussion, the following theorem is stated by assuming that $A_N^\top A_N$ is diagonal.
 
\begin{theorem}\label{thm:error-analysis}
We define $p_j = \lambda_j /\mu_j$ for all $j$. Let  $\check{\pi}_y^{(j)}$ and $\pi_y^{(j)}$  denote the $j$-th mode marginals of the approximate and true posterior distributions, $\check{\pi}_y$ and $ {\pi}_y $, respectively. Suppose that 
$p_j^{-1} \eps_{j}^a(\tau)  = O(\tau)$, $\lambda^{-1}_j \eps_{j}^b(\tau) = O(1)$.
Then, for $j\leq N$, the Kullback-Leibler divergence satisfies
\begin{align}
\nonumber
& \textup{D}_\textup{KL}\Big(\check{\pi}^{(j)}_y \,\Big|\Big|\, {\pi}^{(j)}_y \Big) 
=   \frac{1}{2}\lambda_j^{-2} \eps_{j}^b(\tau)^2 \\
& 
\quad -  \frac{\lambda_j^{-1} \eps_{j}^b(\tau)}{1+\sigma^{-2}\mu_j 
(A_N^\top A_N)_{jj}
}
\left( \sigma^{-2} 
(A_N^\top y)_j - \lambda_j^{-1} \eps_{j}^b(\tau) \right) \left( (p_j-1)\tau -p_j^{-1} \eps_{j}^a(\tau) \right)  + O(\tau^2).
\label{eq:thm-3-1}
\end{align}
For $j> N$, we have
$\textup{D}_\textup{KL}\big(\check{\pi}^{(j)}_y \,\big|\big|\, {\pi}^{(j)}_y \big)  =  \lambda_j^{-2}\eps_{j}^b(\tau)^2 \left(\frac{1}{2} + (p_j-1) \tau -p_j^{-1} \eps_{j}^a(\tau) \right) + O(\tau^2)$.
\end{theorem}


\begin{proof}
The proof relies on Proposition \ref{prop:stationary-langevin} and the fact that the $j$-th mode marginals $\check{\pi}^{(j)}_y$ and ${\pi}^{(j)}_y$ are Gaussian, $\mathcal{N}(\check{m}_j(\tau),\check{v}_j(\tau))$ and $\mathcal{N}(m_j,v_j)$, respectively, hence the Kullback-Leibler divergence has an explicit form and standard perturbation arguments lead to the desired estimates. Full details are provided in Appendix \ref{sec:thm:3-1}.
\end{proof}

\begin{remark} Note that Theorem \ref{thm:error-analysis} can provide a set of sufficient conditions that ensure that the global convergence error of the sampler 
is bounded in infinite dimensions:
$
\sum_j \left| \lambda_j^{-1} \eps_{j}^b(\tau)\right| <\infty$,
$\left|p_j^{-1} \eps_{j}^a(\tau)\right| <C_1$, 
and $\left|(A_N^\top y)_j \right|< C_2,
$ 
where $C_1,C_2$ do not depend on $j$.
\end{remark}

\section{The Essence of Preconditioning}\label{sec:preconditioning}
We now elucidate the role of the preconditioner $C$ in the infinite-dimensional Gaussian setting introduced in the previous section. We begin with two preliminary remarks:
\vspace*{-0.6em}
\begin{itemize}
\item In our analysis, $C$ first appears in the forward diffusion \eqref{eq:forward-diffusion}, whose time-reversal learns the prior, and must be carried through the  Langevin sampler \eqref{eq:Langevin-SDE} to target the correct posterior. 
\item $C$ cannot be the identity: it must be trace-class to keep the diffusion well-posed and to stabilize the Langevin updates across all modes. Indeed, if $C = \text{Diag}(\lambda_j)$, the drift in the $j$-th mode contains the factor $\lambda_j [e^{-\tau} \mu_j + (1-e^{-\tau})\lambda_j]^{-1}$, which, unless $\lambda_j$ decays sufficiently fast, blows up like $\mu_j^{-1}$ as $j\to \infty$, making the sampler unstable at fine scales. This is a consequence of the infinite-dimensional setting, where $C_\mu$ must be trace-class.
\end{itemize}
\vspace*{-0.6em}
Since the preconditioner plays a role in the rate of convergence across all posterior modes, it is natural to ask whether there exists a $C$ that ensures a uniform convergence rate for the Langevin sampler. 
To this aim, in the next propositionwe  derive the mean reversion rate $\kappa$ of the preconditioned Langevin dynamics \eqref{eq:Langevin-SDE}; the proof is given in Appendix \ref{appendix:prop-4-1}.
\begin{proposition}\label{prop:reversion-rate}
Assume that $A_N^\top A_N$ is diagonal.  For the observed modes $j\leq N$, the mean reversion rate is
\begin{equation}
\kappa^{(j)} =  \lambda_j   \left(  \left[e^{-\tau} \mu_j + (1-e^{-\tau}) \lambda_j\right]^{-1}   +   \sigma^{-2} 
(A_N^\top A_N)_{jj}
+  \lambda_j^{-1} \eps^a_j(\tau)  \right),
\end{equation}
while for the unobserved modes $j>N$, 
$
\kappa^{(j)} =  \lambda_j  \big[ 
\left[e^{-\tau} \mu_j + (1-e^{-\tau}) \lambda_j\right]^{-1}  +    \lambda_j^{-1} \eps^a_j(\tau) \big]$.
\end{proposition}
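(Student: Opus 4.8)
The plan is to read off the reversion rate directly from the mode-decomposed SDEs already derived in Section~\ref{sec:gaussian}. Recall that, under Assumption~\ref{assumption:score-error} and with $A_N^\top A_N$ diagonal, the observed modes $j\le N$ evolve according to a scalar linear SDE
\[
dX_t^{(j)} = -\Big[ s_j(\tau;\mu) + \eps_j^a(\tau) + \lambda_j\,\sigma^{-2}(A_N^\top A_N)_{jj}\Big] X_t^{(j)}\,dt + \big[\text{const}\big]\,dt + \sqrt{2\lambda_j}\,dW_t^{(j)},
\]
where I have used $C_N=\operatorname*{Diag}_{j\le N}(\lambda_j)$ so that the data term $C_N\sigma^{-2}A_N^\top A_N$ acts on the $j$-th mode by multiplication by $\lambda_j\sigma^{-2}(A_N^\top A_N)_{jj}$. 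An Ornstein--Uhlenbeck equation $dX_t = -\kappa X_t\,dt + b\,dt + \sqrt{2D}\,dW_t$ has mean reverting to $b/\kappa$ at exponential rate $\kappa$; hence the mean reversion rate of the $j$-th mode is exactly the coefficient multiplying $X_t^{(j)}$ in the drift, namely $\kappa^{(j)} = s_j(\tau;\mu) + \eps_j^a(\tau) + \lambda_j\sigma^{-2}(A_N^\top A_N)_{jj}$.

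The remaining step is purely algebraic: substitute the explicit expression $s_j(\tau;\mu) = \lambda_j\big[e^{-\tau}\mu_j + (1-e^{-\tau})\lambda_j\big]^{-1}$ from the Gaussian mode decomposition, and factor out $\lambda_j$ to obtain
\[
\kappa^{(j)} = \lambda_j\left( \big[e^{-\tau}\mu_j + (1-e^{-\tau})\lambda_j\big]^{-1} + \sigma^{-2}(A_N^\top A_N)_{jj} + \lambda_j^{-1}\eps_j^a(\tau)\right),
\]
which is the claimed formula. For the unobserved modes $j>N$ the data term is absent (these modes lie in the nullspace of $A$, so $(A_N^\top A_N)_{jj}$ does not appear), and the scalar SDE is $dX_t^{(j)} = -[s_j(\tau;\mu)+\eps_j^a(\tau)]X_t^{(j)}\,dt - \eps_j^b(\tau)\,dt + \sqrt{2\lambda_j}\,dW_t^{(j)}$, so the same reasoning gives $\kappa^{(j)} = \lambda_j\big(\big[e^{-\tau}\mu_j+(1-e^{-\tau})\lambda_j\big]^{-1} + \lambda_j^{-1}\eps_j^a(\tau)\big)$.

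There is essentially no obstacle here; the proposition is a direct bookkeeping consequence of the mode-wise SDEs and the standard fact about OU mean reversion. The only point requiring a small amount of care is ensuring that the preconditioner $C_N=\operatorname*{Diag}(\lambda_j)$ is correctly combined with the diagonal matrices $\operatorname*{Diag}(s_j)$ and $\operatorname*{Diag}(\eps_j^a)$ and with $\sigma^{-2}A_N^\top A_N$ when reading off the drift coefficient of each scalar mode — i.e. tracking which factor of $\lambda_j$ came from the preconditioner versus from $s_j$ — and noting that the additive (non-homogeneous) terms $C_N\sigma^{-2}A_N^\top y$ and $\eps_j^b(\tau)$ do not affect the reversion \emph{rate}, only the reversion \emph{point}. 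I would therefore keep the proof to a couple of lines, citing Proposition~\ref{prop:stationary-langevin} and the OU structure.
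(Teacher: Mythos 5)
Your proposal is correct and follows essentially the same route as the paper: both reduce to the mode-wise scalar linear SDEs under Assumption \ref{assumption:score-error} with $A_N^\top A_N$ diagonal, identify $\kappa^{(j)}$ as the coefficient of $X_t^{(j)}$ in the drift, and then substitute $s_j(\tau;\mu)=\lambda_j[e^{-\tau}\mu_j+(1-e^{-\tau})\lambda_j]^{-1}$ and factor out $\lambda_j$. The only cosmetic difference is that the paper re-derives the exponential convergence of the mean by solving the ODE for $m^{(j)}(t)=\mathbb{E}[X_t^{(j)}]$ explicitly, whereas you quote the standard Ornstein--Uhlenbeck fact; this changes nothing of substance.
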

We can make a few comments:
\vspace*{-0.6em}
\begin{itemize}
\item For the unobserved modes $j>N$, the convergence rate is $\lambda_j [e^{-\tau} \mu_j + (1-e^{-\tau})\lambda_j]^{-1}$ ($\simeq \lambda_j /\mu_j$ for small $\tau$) when the error $\eps_j^a$ is negligible, and therefore we should choose $\lambda_j  = \mu_j$ for all $j$ to get a convergence uniform in $j$, that is to say, $C=C_\mu$. 
\item For the observed modes $j\leq N$, the convergence rates for those modes such that $\mu_j \ll \sigma^2/
(A_N^\top A_N)_{jj}
$ (or $(A_N^\top A_N)_{jj}=0$) are $\lambda_j /\mu_j$, whereas for modes such that $\mu_j \gg \sigma^2 / 
(A_N^\top A_N)_{jj}
$ the convergence rates are $\lambda_j \sigma^{-2}
(A_N^\top A_N)_{jj}
$. We should then choose $\lambda_j= [\mu_j^{-1}+\sigma^{-2} 
(A_N^\top A_N)_{jj}
]^{-1}$, or equivalently $C = [C_{\mu}^{-1}+\sigma^{-2} A^\top A]^{-1}$.
\end{itemize}
\vspace*{-0.6em}
We now refine our analysis of the preconditioner by incorporating a first-order correction that accounts for the score approximation error at small $\tau$, the regime in which the score is typically learned. The proof of the following theorem relies on a straightforward perturbation argument; full details are given in Appendix \ref{sec:B-2}. 


\begin{theorem}\label{thm:preconditioner}
In addition to Assumption \ref{assumption:score-error}, we further suppose that $A_N^\top A_N$ is diagonal, and that $ \eps_{j}^a(\tau)  = \eps_j^a \tau + O(\tau^2)$. Under these conditions, the optimal preconditioner $C$ is also diagonal in the basis $(v_j)$, with eigenvalues that admit the expansion 
$\lambda_j = \lambda_j^{(0)} + \lambda_j^{(1)} \tau   + O(\tau^2)$. 
For the observed modes $j\leq N$, we have 
\begin{equation}
\lambda_j^{(0)} = \left[ \mu^{-1}_{j} + \sigma^{-2}
(A_N^\top A_N)_{jj} 
\right]^{-1}, \qquad\lambda_j^{(1)} =   {\lambda_j^{(0)}}^{3}  \mu_j^{-2} - {\lambda_j^{(0)}}^2 {\mu}_j^{-1}  - \eps_j^a {\lambda_j^{(0)}}  .
\end{equation}
For the unobserved modes $j>N$, we have
$\lambda_j^{(0)} = \mu_j$, $\lambda_j^{(1)}= -  \mu_j  \eps_j^a$. 
\end{theorem}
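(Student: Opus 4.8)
The plan is to optimize the mean reversion rate $\kappa^{(j)}$ from Proposition \ref{prop:reversion-rate} mode by mode, treating $\tau$ as a small parameter. The natural criterion (consistent with the discussion preceding the theorem) is to choose $\lambda_j$ so that $\kappa^{(j)}$ is independent of $j$, i.e.\ equals a target rate; since the ambiguity in the overall constant only rescales time, the concrete goal is to make $\lambda_j$ match the value that makes $\kappa^{(j)}$ uniform, which is exactly $\lambda_j = \big[\text{(the $j$-dependent bracket in }\kappa^{(j)})\big]^{-1}$ up to the global constant. So first I would write, for the observed modes,
\[
\kappa^{(j)} = \lambda_j\Big( \big[e^{-\tau}\mu_j + (1-e^{-\tau})\lambda_j\big]^{-1} + \sigma^{-2}(A_N^\top A_N)_{jj} + \lambda_j^{-1}\eps_j^a(\tau)\Big),
\]
and impose that this equal a $j$-independent constant (which, by the $\tau\to 0$ analysis, must be $1$ to leading order after the natural normalization). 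This gives an implicit equation $F_j(\lambda_j,\tau)=\text{const}$ for $\lambda_j$.

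Next I would solve this implicit equation by a regular perturbation expansion. Substitute $\lambda_j = \lambda_j^{(0)} + \lambda_j^{(1)}\tau + O(\tau^2)$, expand $e^{-\tau} = 1-\tau+O(\tau^2)$ and $1-e^{-\tau} = \tau + O(\tau^2)$, and use the assumed expansion $\eps_j^a(\tau) = \eps_j^a\tau + O(\tau^2)$. At order $\tau^0$: the term $(1-e^{-\tau})\lambda_j$ drops out and $e^{-\tau}\mu_j \to \mu_j$, so the bracket becomes $\mu_j^{-1} + \sigma^{-2}(A_N^\top A_N)_{jj}$, forcing $\lambda_j^{(0)} = \big[\mu_j^{-1} + \sigma^{-2}(A_N^\top A_N)_{jj}\big]^{-1}$. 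At order $\tau^1$: I would differentiate $\big[e^{-\tau}\mu_j + (1-e^{-\tau})\lambda_j\big]^{-1}$ in $\tau$ at $\tau=0$, obtaining $-\mu_j^{-2}\big(-\mu_j + \lambda_j^{(0)}\big) = \mu_j^{-2}(\mu_j - \lambda_j^{(0)})$, collect this together with the contributions $\lambda_j^{(1)}$ multiplying the order-$\tau^0$ bracket and the $\eps_j^a$ term, set the total $\tau^1$ coefficient of $\kappa^{(j)}$ to zero, and solve for $\lambda_j^{(1)}$. Carrying the algebra through (using $\lambda_j^{(0)}$ to simplify) should yield $\lambda_j^{(1)} = {\lambda_j^{(0)}}^3\mu_j^{-2} - {\lambda_j^{(0)}}^2\mu_j^{-1} - \eps_j^a\lambda_j^{(0)}$. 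For the unobserved modes $j>N$ the computation is identical but with the $\sigma^{-2}(A_N^\top A_N)_{jj}$ term absent, so $\lambda_j^{(0)} = \mu_j$, and substituting $\lambda_j^{(0)}=\mu_j$ into the order-$\tau$ relation makes the first two terms cancel, leaving $\lambda_j^{(1)} = -\mu_j\eps_j^a$.

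Finally I would check that this expansion indeed defines a diagonal, trace-class, positive operator: positivity and the expansion follow from $\lambda_j^{(0)}>0$ and continuity in $\tau$ for $\tau$ small, and trace-class-ness follows because $\lambda_j^{(0)} \le \mu_j$ with $(\mu_j)$ summable (and the $O(\tau)$ correction is a bounded multiple of $\mu_j$ for the unobserved modes, uniformly in $j$, provided $\eps_j^a$ is bounded). I would also note the consistency with the $\tau\to 0$ comments preceding the theorem: $\lambda_j^{(0)}$ reproduces exactly the $C = [C_\mu^{-1} + \sigma^{-2}A^\top A]^{-1}$ prescription. The main obstacle is not conceptual but bookkeeping: correctly expanding the reciprocal $\big[e^{-\tau}\mu_j + (1-e^{-\tau})\lambda_j(\tau)\big]^{-1}$ to first order while $\lambda_j$ itself depends on $\tau$, and keeping track of which terms are genuinely $O(\tau^2)$ — in particular making sure the optimization criterion (uniform rate) is applied consistently so that the claimed formula for $\lambda_j^{(1)}$ comes out with the stated signs. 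A secondary subtlety is justifying that "optimal" means "uniform rate across modes" rather than, say, maximizing the slowest rate; I would make this precise by noting that with the mode decoupling, the slowest mode governs global mixing, so equalizing all rates is optimal, and any common rescaling is absorbed into the time variable.
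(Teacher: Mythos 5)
Your proposal is correct and follows essentially the same route as the paper: it uses the mean reversion rates of Proposition \ref{prop:reversion-rate}, imposes the uniform-rate condition $\kappa^{(j)}=1$ (the paper's equation \eqref{eq:uniform-eq}), and performs a regular perturbation expansion in $\tau$, matching the orders $\tau^0$ and $\tau^1$ to obtain $\lambda_j^{(0)}$ and $\lambda_j^{(1)}$. The only differences are cosmetic (you differentiate the reciprocal $[e^{-\tau}\mu_j+(1-e^{-\tau})\lambda_j]^{-1}$ directly where the paper expands $\check{\mu}_j$ first, and you add remarks on normalization, positivity, and trace-class-ness that the paper leaves implicit), and your order-$\tau$ bookkeeping correctly notes that $\lambda_j^{(1)}$ does not enter through the $(1-e^{-\tau})\lambda_j$ term at first order.
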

Based on Theorem  \ref{thm:preconditioner}, we  make a few comments: 
\begin{itemize} 
\item To compute the preconditioner $C$, one would need information on $A^\top A$, $\sigma$, $C_\mu$, and the score approximation error. Our analysis, however, suggests a simple and practical choice: take $C$ as close as possible to the prior covariance $C_\mu$. For higher-order modes, the leading-order term of the preconditioner coincides with $C_\mu$, and this approximation is particularly justified when the prior decays quickly, so that $\mu_j^{-1} \gg \sigma^{-2} (A^\top_N A_N)$ for the low-order modes. Any available knowledge of the posterior covariance or score error can then be used to refine this first approximation. 
\item This is not the first occurrence in the literature of an optimal preconditioner for diffusion models in infinite dimensions. For example, while analyzing the convergence error of time-reversed SDE dynamics in infinite dimensions, Pidstrigach et al. \cite{pidstrigach2024infinite} derived a similar result for the optimal $C$ by minimizing the Wasserstein-$2$ distance between the true data distribution and the learned sample distribution. Interestingly, assuming no data model and a perfect score, our framework yields the same optimal $C$, with a few caveats: in our case, the preconditioner arises directly from the mean-reversion rate of the Langevin dynamics. Hence, the optimal covariance we identify does not merely minimize an upper bound: it represents, under the stated assumptions, the best achievable choice in practice for ensuring a uniform rate of convergence across all modes. 
\end{itemize}

\section{Non-Gaussian Sampling}\label{sec:non-gaussian}

We can generalize the results of Sections \ref{sec:gaussian} and \ref{sec:preconditioning} by considering the case of a general class of prior measures  $\mu$ assumed to be absolutely continuous with respect to a Gaussian reference measure $\mathcal{N}(0,C_\mu)$ with density proportional to $\exp(-\Phi)$. We present the main ideas here, relegating the more quantitative results and proofs in Appendix \ref{sec:3:appendix}.

To reproduce the approach of the Gaussian setting, one first needs to diagonalize the Langevin SDE system, which in turn requires diagonalizing the score function $S(X,\tau; \, \mu)$.   

\begin{proposition}\label{prop:score-phi}
We assume that $C$ and $C_\mu$ have the same basis of eigenfunctions $(v_j)$ and we define $X^{(j)}= \langle X,v_j\rangle$. We assume that $\Phi(X)= \sum_j \phi_j(X^{(j)})$.
The score function \eqref{def:score-non-gaussian} can be written as $S(X, \tau;\,\mu) =  \sum_j S^{(j)}(X^{(j)}, \tau; \mu) v_j$,
where
\begin{align}
S^{(j)}(X^{(j)}, \tau; \, \mu)  =  - \lambda_j \partial_j\check{\phi}_j (X^{(j)},\tau)- 
s_j(\tau,\mu) X^{(j)},
\end{align}
with $\check{\phi}_j(X^{(j)},\tau) = - \log   \mathbb{E} \big[ \exp(-\phi_j(\widetilde{X}^{(j)}_0))\mid \widetilde{X}^{(j)}_\tau = X^{(j)} \big]$, and
\begin{equation}
\begin{pmatrix}
\widetilde{X}^{(j)}_0\\\widetilde{X}^{(j)}_\tau    
\end{pmatrix}
\sim 
\mathcal{N}
\left( 0, 
\begin{pmatrix}
\mu_j & e^{-\tau/2} \mu_j\\
e^{-\tau/2} \mu_j & e^{-\tau} \mu_j +(1-e^{-\tau}) \lambda_j 
\end{pmatrix}
\right).
\end{equation}
\end{proposition}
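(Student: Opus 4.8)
The plan is to start from the mixed representation \eqref{def:score-non-gaussian} of the score,
\[
S(X, \tau;\,\mu) = - e^{\tau/2}\, \mathbb{E}\big[ C\,(C_\mu C_\tau^{-1})^{-1} \nabla \Phi(X_0) \mid X_\tau = X\big] - C C_\tau^{-1} X,
\]
and to exploit the fact that, under the joint hypotheses, every operator appearing here is diagonal in $(v_j)$. Since $C = \operatorname{Diag}(\lambda_j)$, $C_\mu = \operatorname{Diag}(\mu_j)$, and $C_\tau = e^{-\tau}C_\mu + (1-e^{-\tau})C = \operatorname{Diag}\big(e^{-\tau}\mu_j + (1-e^{-\tau})\lambda_j\big)$, the operator $C (C_\mu C_\tau^{-1})^{-1} = C C_\tau C_\mu^{-1}$ acts on the $j$-th mode by the scalar $\lambda_j \mu_j^{-1}\big(e^{-\tau}\mu_j + (1-e^{-\tau})\lambda_j\big)$, and $C C_\tau^{-1}$ acts by $\lambda_j\big(e^{-\tau}\mu_j + (1-e^{-\tau})\lambda_j\big)^{-1} = s_j(\tau;\mu)$. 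So the second term is immediately $-\sum_j s_j(\tau;\mu) X^{(j)} v_j$, matching the claimed formula, and the first term reduces mode-by-mode provided the conditional expectation decouples across $j$.

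The heart of the argument is therefore the decoupling. First I would observe that the forward diffusion \eqref{eq:forward-diffusion} driven by a $C$-Wiener process, started at $X_0 \sim \mu$, has solution $X_\tau = e^{-\tau/2} X_0 + \int_0^\tau e^{-(\tau-s)/2}\sqrt{C}\,dW_s$, whose stochastic-integral part is a centered Gaussian with covariance $(1-e^{-\tau})C = \operatorname{Diag}\big((1-e^{-\tau})\lambda_j\big)$, independent of $X_0$; projecting onto $v_j$ gives $X_\tau^{(j)} = e^{-\tau/2} X_0^{(j)} + \xi_j$ with $\xi_j \sim \mathcal N(0,(1-e^{-\tau})\lambda_j)$ independent across $j$ and independent of $X_0$. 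Next I would use the structural assumption on the prior: because $\frac{d\mu}{d\mathcal N(0,C_\mu)} \propto \exp(-\Phi)$ with $\Phi(X) = \sum_j \phi_j(X^{(j)})$, the density factorizes, so under $\mu$ the coordinates $(X_0^{(j)})_j$ are independent with $X_0^{(j)}$ having law $\propto \exp(-\phi_j(x))\,\mathcal N(0,\mu_j)(dx)$ — i.e. exactly the marginal of $\widetilde X_0^{(j)}$ after the $\exp(-\phi_j)$ tilt. Combining with the independence of the Gaussian increments, the pairs $(X_0^{(j)}, X_\tau^{(j)})$ are mutually independent across $j$, and each has (before tilting) the stated bivariate Gaussian law with covariance matrix $\begin{pmatrix} \mu_j & e^{-\tau/2}\mu_j \\ e^{-\tau/2}\mu_j & e^{-\tau}\mu_j + (1-e^{-\tau})\lambda_j\end{pmatrix}$; the $\exp(-\Phi)$ tilt factors across $j$ as $\prod_j \exp(-\phi_j(X_0^{(j)}))$, so conditioning on $X_\tau = X$ the posterior over $X_0$ remains a product over $j$, with the $j$-th factor being the tilt of $\widetilde X_0^{(j)} \mid \widetilde X_\tau^{(j)} = X^{(j)}$.

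Given this product structure, $\mathbb{E}\big[\,\cdot \mid X_\tau = X\big]$ of the $j$-th component of $\nabla\Phi(X_0)$, namely $\phi_j'(X_0^{(j)})$, depends only on $X^{(j)}$, so the first term of $S$ becomes $-e^{\tau/2}\sum_j \lambda_j \mu_j^{-1}\big(e^{-\tau}\mu_j + (1-e^{-\tau})\lambda_j\big)\,\mathbb{E}\big[\phi_j'(\widetilde X_0^{(j)}) \mid \widetilde X_\tau^{(j)} = X^{(j)}\big]\, v_j$. To match the stated form I would then show the scalar identity
\[
e^{\tau/2}\, \mu_j^{-1}\big(e^{-\tau}\mu_j + (1-e^{-\tau})\lambda_j\big)\,\mathbb{E}\big[\phi_j'(\widetilde X_0^{(j)}) \mid \widetilde X_\tau^{(j)} = X^{(j)}\big] = \partial_j \check\phi_j(X^{(j)}, \tau),
\]
where $\check\phi_j(X^{(j)},\tau) = -\log \mathbb{E}\big[\exp(-\phi_j(\widetilde X_0^{(j)})) \mid \widetilde X_\tau^{(j)} = X^{(j)}\big]$. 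This is the one genuinely computational step: differentiate $\check\phi_j$ in $X^{(j)}$ by writing the conditional expectation as a ratio of Gaussian integrals (the conditional law of $\widetilde X_0^{(j)}$ given $\widetilde X_\tau^{(j)} = x$ is Gaussian with mean $\frac{e^{-\tau/2}\mu_j}{e^{-\tau}\mu_j+(1-e^{-\tau})\lambda_j}\,x$ and variance $\mu_j - \frac{e^{-\tau}\mu_j^2}{e^{-\tau}\mu_j+(1-e^{-\tau})\lambda_j}$), apply the Gaussian integration-by-parts / Stein identity to turn the derivative of the kernel into a derivative hitting $\phi_j$, and collect the prefactor; the weights in the resulting conditional expectation reproduce the $\exp(-\phi_j)$ tilt, which is why the plain conditional expectation of $\phi_j'$ reappears. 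I expect this Stein-type manipulation — keeping track of the exact Gaussian prefactors so that they collapse to $\lambda_j \partial_j\check\phi_j$ — to be the main obstacle; everything else is bookkeeping. Finally, assembling the two terms gives $S(X,\tau;\mu) = \sum_j\big(-\lambda_j \partial_j\check\phi_j(X^{(j)},\tau) - s_j(\tau;\mu)X^{(j)}\big) v_j$, which is the claim.
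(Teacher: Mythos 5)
Your preliminaries (diagonalizing $C$, $C_\mu$, $C_\tau$, factorizing the tilted prior across modes, solving \eqref{eq:forward-diffusion} to get independent pairs $(X_0^{(j)},X_\tau^{(j)})$ with the stated bivariate Gaussian law before tilting, and noting that the conditional law of $X_0^{(j)}$ given $X_\tau^{(j)}=x$ is the $e^{-\phi_j}$-tilt of the Gaussian conditional) match the first half of the paper's argument. The gap is exactly in the one step you defer. The scalar identity you need, $e^{\tau/2}\mu_j^{-1}\check{\mu}_j\,\mathbb{E}\big[\phi_j'(X_0^{(j)})\mid X_\tau^{(j)}=x\big]=\partial_j\check{\phi}_j(x,\tau)$ with $\check{\mu}_j=e^{-\tau}\mu_j+(1-e^{-\tau})\lambda_j$, is false. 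Carrying out the Gaussian integration by parts you sketch, with conditional mean $m_\tau(x)=e^{-\tau/2}\mu_j\check{\mu}_j^{-1}x$ and variance $v_\tau=\mu_j-e^{-\tau}\mu_j^2\check{\mu}_j^{-1}$, gives
\[
\partial_j\check{\phi}_j(x,\tau)
=\frac{e^{-\tau/2}\mu_j}{\check{\mu}_j}\;
\frac{\int\phi_j'(z)\,e^{-\phi_j(z)}\,\mathcal{N}(m_\tau(x),v_\tau)(z)\,dz}{\int e^{-\phi_j(z)}\,\mathcal{N}(m_\tau(x),v_\tau)(z)\,dz}
=\frac{e^{-\tau/2}\mu_j}{\check{\mu}_j}\,\mathbb{E}\big[\phi_j'(X_0^{(j)})\mid X_\tau^{(j)}=x\big],
\]
the last conditional expectation being under the true (tilted) law. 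The prefactor is thus $e^{-\tau/2}\mu_j/\check{\mu}_j$, not $e^{\tau/2}\check{\mu}_j/\mu_j$: your first term equals $-e^{\tau}\check{\mu}_j^{2}\mu_j^{-2}\,\lambda_j\partial_j\check{\phi}_j(x,\tau)$, which misses the claimed $-\lambda_j\partial_j\check{\phi}_j(x,\tau)$ by the factor $e^{\tau}\check{\mu}_j^{2}/\mu_j^{2}\neq 1$ for $\tau>0$. In other words, the literal reading of \eqref{def:score-non-gaussian} (operator $CC_\tau C_\mu^{-1}$ with prefactor $e^{\tau/2}$) is not consistent with the proposition; the Stein computation forces the coefficient $e^{-\tau/2}C C_\mu C_\tau^{-1}$, and the two readings agree only at $\tau=0$. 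So the "bookkeeping" you postpone cannot close as written.

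The paper avoids this entirely: after the same mode-wise factorization, it pushes the tilted prior through the OU transition kernel, identifies the density ratio $\mu_\tau^{(j)}(x)/\mathcal{N}(0,\check{\mu}_j)(x)\propto\mathbb{E}\big[e^{-\phi_j(\widetilde{X}_0^{(j)})}\mid \widetilde{X}_\tau^{(j)}=x\big]=e^{-\check{\phi}_j(x,\tau)}$, and then uses the one-dimensional representation $S^{(j)}(x,\tau)=\lambda_j\partial_x\log\mu_\tau^{(j)}(x)$ (equivalently the defining formula \eqref{def:score} via Tweedie's identity), which splits immediately into $-\lambda_j\partial_x\check{\phi}_j(x,\tau)-s_j(\tau;\mu)x$. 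To repair your route, anchor it in \eqref{def:score} rather than in \eqref{def:score-non-gaussian}: the same Stein identity gives $\mathbb{E}[X_0^{(j)}\mid X_\tau^{(j)}=x]=m_\tau(x)-v_\tau\,\mathbb{E}[\phi_j'(X_0^{(j)})\mid X_\tau^{(j)}=x]$, and substituting this into \eqref{def:score} yields the stated mode-wise formula with the correct constants; alternatively, first re-derive the conditional-expectation representation of the score (correcting its coefficient) before diagonalizing.
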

We assume a more general form of the score approximation error.

\begin{assumption}\label{assumption:score-error-phi}
 We consider an approximate score $S_\theta(X,\tau; \, \mu)$ such that 
\[
\big\langle S(X,\tau;\, \mu)-S_\theta(X,\tau;\, \mu), \; v_j\big\rangle =  \eps_j^a(\tau) \big[ X^{(j)}  + \partial_j \phi_j(X^{(j)}) \big]+ \eps^b_j(\tau).
\]     
\end{assumption}
With the learned score,  the preconditioned Langevin SDE for the observed modes $j\leq N$ becomes
\begin{align*}
dX_t^N  = - M_N X_t^N dt + b_N dt  +
\sqrt{2C_N} dW^N_t,
\end{align*}
where
\begin{align*}
M_N = &
 \operatorname*{Diag}\limits_{1\leq j \leq N}\left( 
 s_j(\tau;\mu)
 \right) + 
 C_N 
 \sigma^{-2}A_N^\top A_N +  \operatorname*{Diag}\limits_{1\leq j \leq N}\left(\eps_j^a(\tau)\right) ,\\
b_N =&    
C_N \sigma^{-2} A_N^\top y - 
C_N \operatorname*{Diag}\limits_{1\leq j \leq N}\left(  \partial_j \check{\phi}_j\right)  -   \operatorname*{Diag}\limits_{1\leq j \leq N}\left(\eps^a_j(\tau) \partial_j \phi_j (X^{(j)}_t)\right) - \operatorname*{Diag}\limits_{1\leq j \leq N}\left(\eps_j^b(\tau)\right).
\end{align*}
The SDE for the unobserved modes $j>N$ can be obtained by taking $A_N=0$  above.

The stationary distribution of the preconditioned Langevin SDE is derived in the following proposition, which makes explicit the dependence on the score approximation error.
\begin{proposition}\label{prop:stationary-phi}
Let Assumption \ref{assumption:score-error-phi} hold true. Under the hypotheses of the previous proposition,  the preconditioned Langevin with approximate score in the drift term has $\check{\pi}_y $ as its stationary distribution. It is absolutely continuous with respect to $\mathcal{N} (\check{m} (\tau), \check{v} (\tau))$, and is given by the density
\begin{equation}
\frac{\text{d}\check{\pi}_y}{\text{d} \mathcal{N} (\check{m} (\tau), \check{v} (\tau))}(X,\tau) \propto \exp\left( -\check{\Phi}  (X,\tau )\right)  .
\end{equation}
For the observed modes $j\leq N$, the negative log-density is
$\check{\Phi}^N ( X^N,\tau) = \sum_{j=1}^N \big[ \check{\phi}_j (X^{(j)},\tau) + \lambda_j^{-1} \eps^a_j(\tau) \phi_j(X_t^{(j)}) \big]$,
the covariance $\check{v}^N(\tau)$ and mean $\check{m}^N (\tau)$ are given by (\ref{expres:vn}-\ref{expres:mn}).
For the unobserved modes $j>N$, the negative log-density is $\check{\Phi}^{(j)} (X^{(j)},\tau) =  \check{\phi}_j (X^{(j)},\tau) + \lambda_j^{-1} \eps^a_j(\tau) \phi_j(X_t^{(j)})$,
the covariance $\check{v}^{(j)}(\tau)$ and mean $\check{m}^{(j)}(\tau)$ are given by (\ref{expres:vjmj}).
\end{proposition}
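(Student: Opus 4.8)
The plan is to recognize the mode-diagonalized preconditioned Langevin SDE --- obtained from \eqref{eq:Langevin-SDE} through the diagonalization of Proposition \ref{prop:score-phi} and the error structure of Assumption \ref{assumption:score-error-phi} --- as a \emph{reversible} (preconditioned-gradient) diffusion, read off its invariant measure on each decoupled block, and then assemble the infinite-dimensional statement.

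\textbf{Step 1: the drift is a preconditioned gradient.} For the observed block $j\le N$, set
\[
V^N(X^N)=\tfrac12\,\big\langle X^N-\check m^N(\tau),\,\check v^N(\tau)^{-1}\big(X^N-\check m^N(\tau)\big)\big\rangle+\check\Phi^N(X^N,\tau),
\]
with $\check\Phi^N$, $\check v^N$, $\check m^N$ as in the statement and \eqref{expres:vn}--\eqref{expres:mn}. Using the definitions of $M_N$ and $b_N$ (the latter carrying the nonlinear terms $\partial_j\check\phi_j$, $\partial_j\phi_j$), together with the two identities $C_N\,\check v^N(\tau)^{-1}=M_N$ and $M_N\,\check m^N(\tau)=C_N\sigma^{-2}A_N^\top y-\operatorname*{Diag}\limits_{1\le j\le N}\big(\eps_j^b(\tau)\big)$ --- both read off from \eqref{expres:vn}--\eqref{expres:mn} --- one checks by direct computation that $-C_N\nabla V^N(X^N)=-M_N X^N+b_N$, i.e. the observed-mode SDE is exactly $dX_t^N=-C_N\nabla V^N(X_t^N)\,dt+\sqrt{2C_N}\,dW_t^N$. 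The only delicate point is that $\check v^N(\tau)^{-1}$ must absorb the (possibly non-diagonal) data term $\sigma^{-2}A_N^\top A_N$ while $\check\Phi^N$ stays a sum over modes; this is forced by $\Phi=\sum_j\phi_j$ and by $\eps_j^a$ entering mode-wise. The unobserved modes are handled identically with $A_N=0$: each solves the scalar equation $dX_t^{(j)}=-\lambda_j\,\partial_j V^{(j)}(X_t^{(j)})\,dt+\sqrt{2\lambda_j}\,dW_t^{(j)}$ with $V^{(j)}(x)=\tfrac12\,\check v_j(\tau)^{-1}\big(x-\check m_j(\tau)\big)^2+\check\phi_j(x,\tau)+\lambda_j^{-1}\eps_j^a(\tau)\phi_j(x)$ and $\check v_j,\check m_j$ as in \eqref{expres:vjmj}.

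\textbf{Step 2: invariant measure of each block, and passage to $H$.} For a finite-dimensional preconditioned gradient diffusion $dX=-C\nabla V(X)\,dt+\sqrt{2C}\,dW$ the stationary Fokker--Planck flux is $C\nabla V\,p+C\nabla p=C\big(\nabla V+\nabla\log p\big)p$, which vanishes for $p\propto e^{-V}$; hence the normalized $e^{-V}$ is the invariant density. Applying this to the $N$-dimensional observed block and to each scalar unobserved equation and then peeling off the Gaussian factor of $e^{-V^N}$, $e^{-V^{(j)}}$, yields $d\check\pi_y^N/d\mathcal{N}(\check m^N(\tau),\check v^N(\tau))\propto e^{-\check\Phi^N(\cdot,\tau)}$ and $d\check\pi_y^{(j)}/d\mathcal{N}(\check m_j(\tau),\check v_j(\tau))\propto e^{-\check\phi_j(\cdot,\tau)-\lambda_j^{-1}\eps_j^a(\tau)\phi_j}$. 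Since the diagonalization decouples \eqref{eq:Langevin-SDE} into the observed $N$-dimensional block and an independent family of scalar equations over $j>N$, the candidate stationary law is the product $\check\pi_y=\check\pi_y^N\otimes\bigotimes_{j>N}\check\pi_y^{(j)}$; one verifies it is a genuine Borel probability measure on $H$, absolutely continuous with respect to $\mathcal{N}(\check m(\tau),\check v(\tau))$ with density $\propto e^{-\check\Phi(\cdot,\tau)}$, $\check\Phi(\cdot,\tau)=\sum_j\big(\check\phi_j+\lambda_j^{-1}\eps_j^a(\tau)\phi_j\big)$. This requires $\check v(\tau)$ positive and trace-class --- so the perturbations $\lambda_j^{-1}\eps_j^a(\tau)$ must not destroy positivity or summability, which is where the smallness conditions on $\eps_j^a$ from Section \ref{sec:gaussian} re-enter --- and $\mathcal{N}(\check m(\tau),\check v(\tau))$-integrability of $\check\Phi(\cdot,\tau)$. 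Under the standing well-posedness hypotheses for \eqref{eq:Langevin-SDE} on $H$, uniqueness of the invariant measure for $C$-Wiener-driven dissipative SDEs (as in the analysis behind Proposition \ref{prop:stationary-langevin}) then promotes ``$\check\pi_y$ is stationary'' to ``$\check\pi_y$ is \emph{the} stationary distribution''.

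The main obstacle is the last step: Steps 1--2 are essentially bookkeeping with the explicit Gaussian-conditioning formulas, but the passage to infinite dimensions hinges on controlling the series $\sum_j\big(\check\phi_j+\lambda_j^{-1}\eps_j^a(\tau)\phi_j\big)$ uniformly in the number of retained modes, so that it defines an integrable density against the reference Gaussian, and on invoking (or establishing) uniqueness of the invariant measure for the infinite-dimensional SDE rather than merely exhibiting a stationary law.
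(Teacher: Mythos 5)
Your proposal is correct and follows essentially the same route as the paper: recognize each decoupled block as a preconditioned overdamped Langevin diffusion $dX=-C\nabla V\,dt+\sqrt{2C}\,dW$ with an explicit potential, read off the invariant density $\propto e^{-V}$, and then split off the Gaussian part to obtain the density with respect to $\mathcal{N}(\check m(\tau),\check v(\tau))$. The only difference is presentational (you center the quadratic form via the identities $C_N\check v^N(\tau)^{-1}=M_N$ and $M_N\check m^N(\tau)=C_N\sigma^{-2}A_N^\top y-\operatorname{Diag}(\eps_j^b)$, whereas the paper writes the uncentered potential and completes the square afterwards), and your closing remarks on the infinite-dimensional assembly and uniqueness flag points the paper also leaves implicit.
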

The interested reader can find in Appendix \ref{sec:3:appendix} a quantitative analysis of this general case, including Theorem \ref{thm:non-gaussian}, which provides an error analysis analogous to Theorem \ref{thm:error-analysis} for the Gaussian case. 
Here we make a few  qualitative comments:
\vspace*{-0.6em}
\begin{itemize}
\item If $\eps_j^a=\eps_j^b=0$ for all $j$, then
$\check{\Phi}(X,\tau)  \stackrel{\tau \to 0}{\to} \sum_j\phi_j (X^{(j)}) $,
$\check{v} (\tau) 
\stackrel{\tau \to 0}{\to}
\left[ C_\mu^{-1} + \sigma^{-2}A^\top A \right]^{-1}$, 
$\check{m} (\tau)  
\stackrel{\tau \to 0}{\to}
\left[ C_\mu^{-1}+\sigma^{-2}A^\top A \right]^{-1} \sigma^{-2} A^\top y,$
that is to say, we get the posterior given the data.
\item The preconditioner influences the convergence rate. Consider the case in which $\phi_j$ is convex, i.e., $\partial_j^2 \phi_j \geq C_{\phi_j} >0$. In this case, for the unobserved modes $j>N$, the convergence rate of the $j$-th mode is $\simeq \lambda_j[\mu_j^{-1} + C_{\phi_j}]$ for small $\tau$, assuming the error $\eps_j^a$ is negligible. To achieve a convergence rate that is uniform in $j$, we should then choose $\lambda_j = [\mu_j^{-1} + C_{\phi_j} ]^{-1}$.
For the observed modes $j\leq N$, and assuming $A_N^\top A_N$ is diagonal for simplicity, we should instead choose $\lambda_j = [ \mu_j^{-1} + \sigma^{-2} 
(A_N^\top A_N)_{jj} + C_{\phi_j} ]^{-1}$.
\item If the error $|\eps_j^a | \ll \lambda_j$, its impact on the stationary distribution of $X^{(j)}_t$ is small. 
Like the Gaussian case, the error $\eps_j^b$ can induce a bias. It can be large since $\lambda_j^{-1} 
\to
+\infty$ as $j\to +\infty$.
\end{itemize}

\section{Illustrations}\label{sec:illustrations}


We verify our theory by applying the preconditioned Langevin dynamics with SGM \eqref{eq:Langevin-SDE} to two linear inverse problems: one based on the Karhunen-Lo\`eve (KL) expansion of the Brownian sheet \cite{wang2008karhunen}, and the other one on an inverse source problem for the heat equation \cite{stuart2010inverse}. Both examples are consistent with the theory of the paper. Implementation and further details are provided in Appendix \ref{sec:4:appendix}.

{\bf Brownian sheet. \hspace{0.5mm}}  We illustrate the discretization-invariance property of our approach on the Brownian sheet, represented by its truncated KL expansion $B^N(x) = \sum_{j,k=1}^N \phi_{j,k} (x) \eta_{j,k}$, $x\in [0,1]^2$, where $\eta_{j,k} \sim \mathcal{N}(0,\mu_{j,k})$ and $(\phi_{j,k}, \mu_{j,k})$ are the KL eigenpairs. For $M \leq N$, the inverse problem consists of reconstructing the KL coefficients and Brownian sheet from noisy data $y_{j,k} = \tilde{\eta}_{j,k} + \eps_{j,k}$, $j,k\leq M$, with $\tilde{\eta}_{j,k} \sim \mathcal{N}(0,\mu_{j,k})$, $\eps_{j,k} \sim \mathcal{N}(0,0.01^2)$ (i.e. $A_{jk}=1$ if $j=k\leq M$ and $0$ otherwise).\,Figure \ref{fig:brownian-sheet} shows the  robustness of our approach with respect to the number of modes$\,M^2$.  

\begin{figure}[h]
    \centering
    \includegraphics[width=\textwidth]{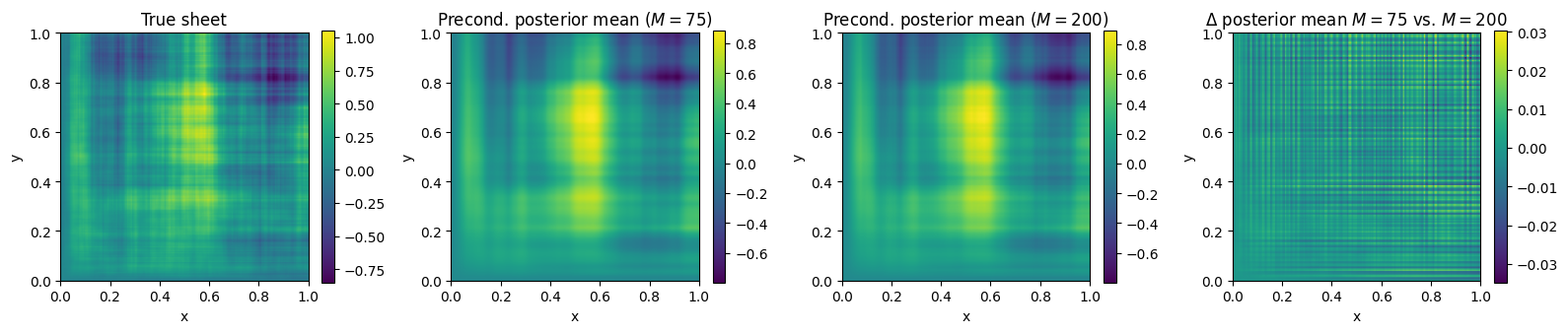}
    \caption{Discretization invariance in reconstructing the KL expansion of the Brownian sheet for increasing number of observed modes $M^2$, with $M=75$ and $M=200$. Here $N=200$.}
    \label{fig:brownian-sheet}
\end{figure}

{\bf Heat equation. \hspace{0.5mm}}  We verify the benefits of the optimal preconditioner $C$ from Theorem \ref{thm:preconditioner} by considering the ill-posed inverse problem of recovering the initial condition $u(x,0)$, $x \in [0,1]^2$, of the heat equation from noisy observations of the solution $u(x,T)$ at time $T=0.1$. Expanding in the eigenpairs $(\psi_{j,k}, \zeta_{j,k})$ of the Dirichlet Laplacian,  one finds that $u(x,t) =\sum_{j,k} e^{-\zeta_{j,k}t}g_{j,k} \psi_{j,k}(x)$, where $g_{j,k} =\langle u(\cdot,0),\psi_{j,k}\rangle $. The inverse problem diagonalizes: we observe $y_{j,k} = e^{-\zeta_{j,k} T} g_{j,k} + \eps_{j,k}$, $j,k\leq M$, with  $g_{j,k}\sim \mathcal{N}(0, e^{- \beta \zeta_{j,k}} )$, $\beta=0.1$, $\eps_{j,k} \sim \mathcal{N}(0, 0.005^2)$. In Figure \ref{fig:heat-ip} 
we compare reconstructions using Langevin dynamics preconditioned with the optimal $C$  ($3$rd column) and vanilla Langevin ($4$th column). Both samplers use a score perturbed by a relative error $\eps_j^a \sim {\cal N}(0, 0.1^2)$ scaled by a small $\tau$, as assumed in Theorem \ref{thm:preconditioner}. The results support our theory: (i) the preconditioned sampler is robust to score approximation error, as expected from the design of $C$ (Theorem \ref{thm:preconditioner}); ii) as shown in the autocorrelation plot in Figure \ref{fig:ACF} (corresponding to the top row of Figure \ref{fig:heat-ip}), the modes converge faster and more uniformly than with the vanilla dynamics, since $C$ targets the optimal mean reversion rates (Proposition \ref{prop:reversion-rate}) ; iii) vanilla Langevin deteriorates as the score error increases (Figure \ref{fig:heat-ip})
due to amplification at fine scales, reproducing the pathological behavior seen in  Figure~\ref{fig:langevin_challenges}.

\begin{figure}[h]
    \centering
    \includegraphics[width=\textwidth]{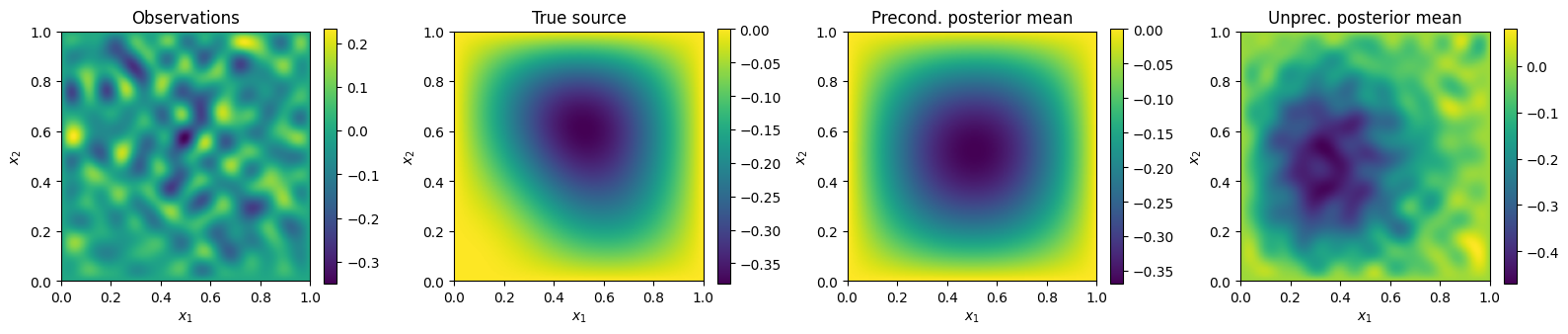}
    \includegraphics[width=\textwidth]{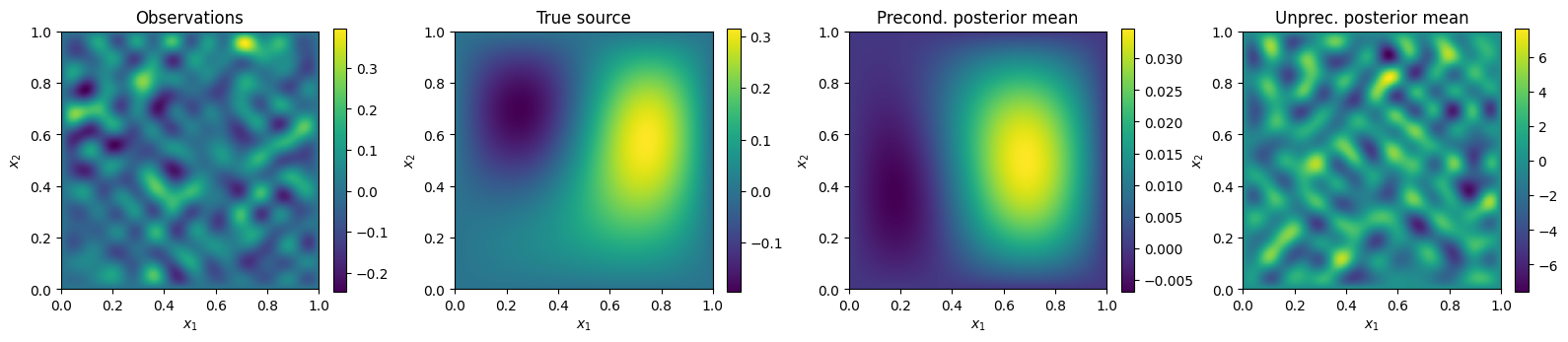}
\caption{Effects of preconditioning in Langevin sampling for the inverse heat source problem.\,$M=15$; top row: $\tau=10^{-3}$, bottom row: $\tau=10^{-1}$. The $4$th column shows the issues of Figure \ref{fig:langevin_challenges}.
}
    \label{fig:heat-ip}
\end{figure}

\begin{figure}[h]
    \centering
    \begin{subfigure}{0.45\linewidth}
        \centering
        \includegraphics[width=\linewidth]{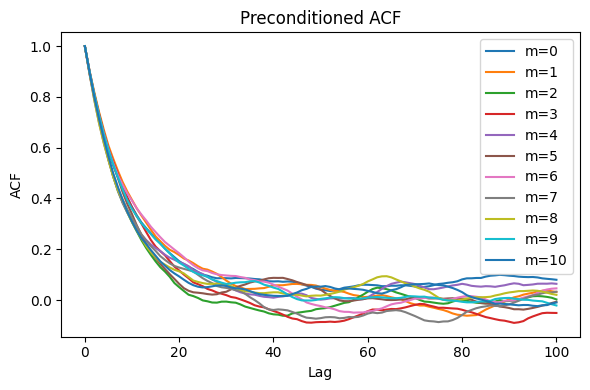}
    \end{subfigure}
    \hfill
    \begin{subfigure}{0.45\linewidth}
        \centering
        \includegraphics[width=\linewidth]{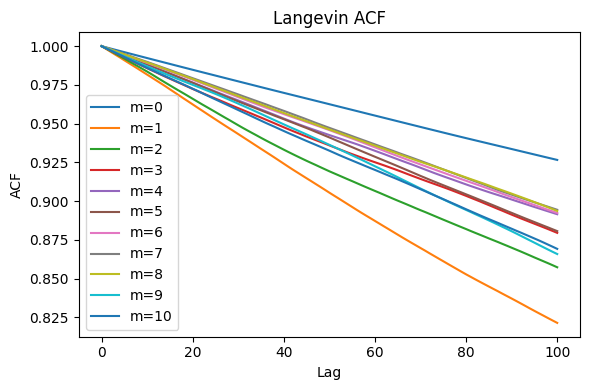}
    \end{subfigure}
    \caption{Mode autocorrelation. Left: Preconditioned Langevin with SGM. Right: Vanilla Langevin.}
    \label{fig:ACF}
\end{figure}

\section{Discussion and Future Work}
We studied a popular sampler---a Langevin-type diffusion driven by score-based generative priors---directly in the infinite-dimensional Bayesian setting, rather than in the usual finite-dimensional one. We showed that na\"{i}vely applying standard techniques in infinite dimensions leads to several issues. To ensure provable posterior sampling and discretization-invariance, our analysis shows that preconditioning the vanilla Langevin is necessary. We prove detailed convergence error estimates and the existence (and form) of an optimal preconditioner---depending on both the forward map$\,A$ and the score error---that yields uniform convergence across all modes. 

As is standard in infinite-dimensional analysis, our results rely on some simplifying assumptions: finite-dimensional data, and co-diagonalizability of the prior and the diffusion's noise covariance. In some parts, we also assumed that $A^\top A$ is diagonal, a common assumption in the theory of  linear Bayesian inverse problems \cite{stuart2010inverse, knapik2011bayesian}. This is not merely a technical convenience: in many classical linear inverse problems, such as the heat equation, tomography, or inverse scattering for Schr\"odinger-type operators under the Born approximation, the forward operator $A$ is compact, and hence one can always find a basis in which
$A^\top A$ is diagonal. 

Nevertheless, the main conclusions of our analysis remain valid even without the diagonalization assumption. For example, the asymptotic expansion in Eq. \eqref{eq:thm-3-1} of Theorem \ref{thm:error-analysis} can be extended to non-diagonal by  replacing scalar expansions with their corresponding matrix-series counterparts, while the arguments for the higher-order modes remains the same. Likewise, in Section \ref{sec:preconditioning}, one can verify that under a perfect score function the optimal preconditioner still takes the form $C=[C_\mu^{-1} + \sigma^{-2} A^\top A]^{-1}$.

Several open questions remain. In particular, how do these results extend to nonlinear inverse problems? Extending the analysis of \cite{baldassari2024taming} to determine an optimal preconditioner for nonlinear inverse problems represents an important direction for future work.

\newpage 

\section*{Acknowledgments}
JG was supported by Agence de l’Innovation de Défense – AID - via Centre Interdisciplinaire d’Etudes pour la Défense et la Sécurité – CIEDS - (project 2021 - PRODIPO). KS was supported by Air Force Office of Scientific Research under grant FA9550-22-1-0176 and the National Science Foundation under grant DMS-2308389. MVdH acknowledges support from the Department of Energy under grant DE-SC0020345, Oxy, the corporate members of the Geo-Mathematical Imaging Group at Rice University, and the Simons Foundation under the MATH + X program. 
\bibliography{arxiv_refs.bib}

\begin{thebibliography}{50}
\providecommand{\natexlab}[1]{#1}
\providecommand{\url}[1]{\texttt{#1}}
\expandafter\ifx\csname urlstyle\endcsname\relax
  \providecommand{\doi}[1]{doi: #1}\else
  \providecommand{\doi}{doi: \begingroup \urlstyle{rm}\Url}\fi

\bibitem[Tarantola(2005)]{tarantola2005inverse}
A. Tarantola.
\newblock \emph{Inverse problem theory and methods for model parameter estimation}.
\newblock SIAM, 2005.

\bibitem[Hadamard(1923)]{hadamard2003lectures}
J. Hadamard.
\newblock \emph{Lectures on Cauchy's problem in linear partial differential equations}.
\newblock Yale University Press, 1923.

\bibitem[Stuart(2010)]{stuart2010inverse}
A.~M. Stuart.
\newblock Inverse problems: a bayesian perspective.
\newblock \emph{Acta numerica}, 19:\penalty0 451--559, 2010.

\bibitem[Knapik et~al.(2011)Knapik, van~der Vaart, and van Zanten]{knapik2011bayesian}
B.~T. Knapik, A.~W. van~der Vaart, and J.~H. van Zanten.
\newblock Bayesian inverse problems with {G}aussian priors.
\newblock \emph{The Annals of Statistics}, 39\penalty0 (5):\penalty0 2626--2657, 2011.

\bibitem[Dashti and Stuart(2011)]{dashti2011uncertainty}
M. Dashti and A.~M. Stuart.
\newblock Uncertainty quantification and weak approximation of an elliptic inverse problem.
\newblock \emph{SIAM Journal on Numerical Analysis}, 49\penalty0 (6):\penalty0 2524--2542, 2011.

\bibitem[Stuart(2014)]{stuart2014uncertainty}
A.~M. Stuart.
\newblock Uncertainty quantification in bayesian inversion.
\newblock \emph{ICM2014. Invited Lecture}, 1279, 2014.

\bibitem[Dashti and Stuart(2013)]{dashti2013bayesian}
M. Dashti and A.~M. Stuart.
\newblock The bayesian approach to inverse problems.
\newblock \emph{arXiv preprint arXiv:1302.6989}, 2013.

\bibitem[Cotter et~al.(2013)Cotter, Roberts, Stuart, and White]{cotter2013mcmc}
S.~L. Cotter, G.~O. Roberts, A.~M. Stuart, and D. White.
\newblock {MCMC} methods for functions: {M}odifying old algorithms to make them faster.
\newblock \emph{Statistical Science}, 28\penalty0 (3):\penalty0 424--446, 2013.

\bibitem[Lassas and Siltanen(2004)]{lassas2004can}
M. Lassas and S. Siltanen.
\newblock Can one use total variation prior for edge-preserving bayesian inversion?
\newblock \emph{Inverse problems}, 20\penalty0 (5):\penalty0 1537, 2004.

\bibitem[Liu et~al.(2016)Liu, Lee, and Jordan]{liu2016kernelized}
Q. Liu, J. Lee, and M. Jordan.
\newblock A kernelized stein discrepancy for goodness-of-fit tests.
\newblock In \emph{International conference on machine learning}, pages 276--284. PMLR, 2016.

\bibitem[Vincent(2011)]{vincent2011connection}
P. Vincent.
\newblock A connection between score matching and denoising autoencoders.
\newblock \emph{Neural computation}, 23\penalty0 (7):\penalty0 1661--1674, 2011.

\bibitem[Song and Ermon(2020)]{song2020improved}
Y. Song and S. Ermon.
\newblock Improved techniques for training score-based generative models.
\newblock \emph{Advances in neural information processing systems}, 33:\penalty0 12438--12448, 2020.

\bibitem[Song and Ermon(2019)]{song2019generative}
Y. Song and S. Ermon.
\newblock Generative modeling by estimating gradients of the data distribution.
\newblock \emph{Advances in neural information processing systems}, 32, 2019.

\bibitem[Song et~al.(2020)Song, Sohl-Dickstein, Kingma, Kumar, Ermon, and Poole]{song2020score}
Y. Song, J. Sohl-Dickstein, D.~P. Kingma, A. Kumar, S. Ermon, and B. Poole.
\newblock Score-based generative modeling through stochastic differential equations.
\newblock \emph{arXiv preprint arXiv:2011.13456}, 2020.

\bibitem[Sohl-Dickstein et~al.(2015)Sohl-Dickstein, Weiss, Maheswaranathan, and Ganguli]{sohl2015deep}
J. Sohl-Dickstein, E. Weiss, N. Maheswaranathan, and S. Ganguli.
\newblock Deep unsupervised learning using nonequilibrium thermodynamics.
\newblock In \emph{International conference on machine learning}, pages 2256--2265. PMLR, 2015.

\bibitem[Ho et~al.(2020)Ho, Jain, and Abbeel]{ho2020denoising}
J. Ho, A. Jain, and P. Abbeel.
\newblock Denoising diffusion probabilistic models.
\newblock \emph{Advances in neural information processing systems}, 33:\penalty0 6840--6851, 2020.

\bibitem[Batzolis et~al.(2021)Batzolis, Stanczuk, Sch{\"o}nlieb, and Etmann]{batzolis2021conditional}
G. Batzolis, J. Stanczuk, C.-B. Sch{\"o}nlieb, and C. Etmann.
\newblock Conditional image generation with score-based diffusion models.
\newblock \emph{arXiv preprint arXiv:2111.13606}, 2021.

\bibitem[Kawar et~al.(2021)Kawar, Vaksman, and Elad]{kawar2021snips}
B. Kawar, G. Vaksman, and M. Elad.
\newblock Snips: Solving noisy inverse problems stochastically.
\newblock \emph{Advances in Neural Information Processing Systems}, 34:\penalty0 21757--21769, 2021.

\bibitem[Jalal et~al.(2021)Jalal, Arvinte, Daras, Price, Dimakis, and Tamir]{jalal2021robust}
A. Jalal, M. Arvinte, G. Daras, E. Price, A.~G. Dimakis, and J. Tamir.
\newblock Robust compressed sensing mri with deep generative priors.
\newblock \emph{Advances in Neural Information Processing Systems}, 34:\penalty0 14938--14954, 2021.

\bibitem[Baldassari et~al.(2024{\natexlab{a}})Baldassari, Siahkoohi, Garnier, Solna, and de~Hoop]{baldassari2024conditional}
L. Baldassari, A. Siahkoohi, J. Garnier, K. Solna, and M.~V. de~Hoop.
\newblock Conditional score-based diffusion models for bayesian inference in infinite dimensions.
\newblock \emph{Advances in Neural Information Processing Systems}, 36, 2024{\natexlab{a}}.

\bibitem[Baldassari et~al.(2024{\natexlab{b}})Baldassari, Siahkoohi, Garnier, Solna, and de~Hoop]{baldassari2024taming}
L. Baldassari, A. Siahkoohi, J. Garnier, K. Solna, and M.~V. de~Hoop.
\newblock Taming score-based diffusion priors for infinite-dimensional nonlinear inverse problems.
\newblock \emph{arXiv preprint arXiv:2405.15676}, 2024{\natexlab{b}}.

\bibitem[Hosseini et~al.(2023)Hosseini, Hsu, and Taghvaei]{hosseini2023conditional}
B. Hosseini, A.~W. Hsu, and A. Taghvaei.
\newblock Conditional optimal transport on function spaces.
\newblock \emph{arXiv preprint arXiv:2311.05672}, 2023.

\bibitem[Hagemann et~al.(2023)Hagemann, Mildenberger, Ruthotto, Steidl, and Yang]{hagemann2023multilevel}
P. Hagemann, S. Mildenberger, L. Ruthotto, G. Steidl, and N.~T. Yang.
\newblock Multilevel diffusion: Infinite dimensional score-based diffusion models for image generation.
\newblock \emph{arXiv preprint arXiv:2303.04772}, 2023.

\bibitem[Feng et~al.(2023)Feng, Smith, Rubinstein, Chang, Bouman, and Freeman]{feng2023score}
B.~T. Feng, J. Smith, M. Rubinstein, H. Chang, K.~L. Bouman, and W.~T. Freeman.
\newblock Score-based diffusion models as principled priors for inverse imaging.
\newblock In \emph{Proceedings of the IEEE/CVF International Conference on Computer Vision}, pages 10520--10531, 2023.

\bibitem[Sun et~al.(2024)Sun, Wu, Chen, Feng, and Bouman]{sun2023provable}
Y. Sun, Z. Wu, Y. Chen, B.~T. Feng, and K.~L. Bouman.
\newblock Provable probabilistic imaging using score-based generative priors.
\newblock \emph{IEEE Transactions on Computational Imaging}, 2024.

\bibitem[Xu and Chi(2024)]{xu2024provably}
X. Xu and Y. Chi.
\newblock Provably robust score-based diffusion posterior sampling for plug-and-play image reconstruction.
\newblock \emph{arXiv preprint arXiv:2403.17042}, 2024.

\bibitem[Stuart et~al.(2004)Stuart, Voss, and Wilberg]{stuart2004conditional}
A.~M. Stuart, J. Voss, and P. Wilberg.
\newblock Conditional path sampling of sdes and the langevin mcmc method.
\newblock 2004.

\bibitem[Hairer et~al.(2005)Hairer, Stuart, Voss, and Wiberg]{hairer2005analysis}
M. Hairer, A.~M. Stuart, J. Voss, and P. Wiberg.
\newblock Analysis of spdes arising in path sampling. part i: The gaussian case.
\newblock 2005.

\bibitem[Hairer et~al.(2007)Hairer, Stuart, and Voss]{hairer2007analysis}
M. Hairer, A.~M. Stuart, and J. Voss.
\newblock Analysis of spdes arising in path sampling part ii: The nonlinear case.
\newblock 2007.

\bibitem[Beskos et~al.(2017)Beskos, Girolami, Lan, Farrell, and Stuart]{beskos2017geometric}
A. Beskos, M. Girolami, S. Lan, P.~E. Farrell, and A.~M. Stuart.
\newblock Geometric mcmc for infinite-dimensional inverse problems.
\newblock \emph{Journal of Computational Physics}, 335:\penalty0 327--351, 2017.

\bibitem[Wallin and Vadlamani(2018)]{wallin2018infinite}
J. Wallin and S. Vadlamani.
\newblock Infinite dimensional adaptive mcmc for gaussian processes.
\newblock \emph{arXiv preprint arXiv:1804.04859}, 2018.

\bibitem[Durmus and Moulines(2019)]{durmus2019high}
A. Durmus and {\'E}. Moulines.
\newblock {High-dimensional {B}ayesian inference via the unadjusted {L}angevin algorithm}.
\newblock \emph{Bernoulli}, 25\penalty0 (4A):\penalty0 2854--2882, 2019.

\bibitem[Durmus and Moulines(2017)]{durmus2017nonasymptotic}
A. Durmus and {\'E}. Moulines.
\newblock Nonasymptotic convergence analysis for the unadjusted {L}angevin algorithm.
\newblock \emph{The Annals of Applied Probability}, 27\penalty0 (3):\penalty0 1551--1587, 2017.

\bibitem[Dalalyan(2017)]{dalalyan2017theoretical}
A.~S. Dalalyan.
\newblock Theoretical guarantees for approximate sampling from smooth and log-concave densities.
\newblock \emph{Journal of the Royal Statistical Society Series B: Statistical Methodology}, 79\penalty0 (3):\penalty0 651--676, 2017.

\bibitem[Hairer et~al.(2014)Hairer, Stuart, and Vollmer]{hairer2014spectral}
M. Hairer, A.~M. Stuart, and S.~J. Vollmer.
\newblock Spectral gaps for a {M}etropolis--{H}astings algorithm in infinite dimensions.
\newblock \emph{The Annals of Applied Probability}, 24\penalty0 (6):\penalty0 2455--2490, 2014.

\bibitem[Cui et~al.(2016)Cui, Law, and Marzouk]{cui2016dimension}
T. Cui, K.~J. Law, and Y.~M. Marzouk.
\newblock Dimension-independent likelihood-informed mcmc.
\newblock \emph{Journal of Computational Physics}, 304:\penalty0 109--137, 2016.

\bibitem[Cui et~al.(2024)Cui, Detommaso, and Scheichl]{cui2024multilevel}
T. Cui, G. Detommaso, and R. Scheichl.
\newblock Multilevel dimension-independent likelihood-informed mcmc for large-scale inverse problems.
\newblock \emph{Inverse Problems}, 40\penalty0 (3):\penalty0 035005, 2024.

\bibitem[Beskos et~al.(2018)Beskos, Jasra, Law, Marzouk, and Zhou]{beskos2018multilevel}
A. Beskos, A. Jasra, K. Law, Y. Marzouk, and Y. Zhou.
\newblock Multilevel sequential monte carlo with dimension-independent likelihood-informed proposals.
\newblock \emph{SIAM/ASA Journal on Uncertainty Quantification}, 6\penalty0 (2):\penalty0 762--786, 2018.

\bibitem[Morzfeld et~al.(2019)Morzfeld, Tong, and Marzouk]{morzfeld2019localization}
M. Morzfeld, X.~T. Tong, and Y.~M. Marzouk.
\newblock Localization for mcmc: sampling high-dimensional posterior distributions with local structure.
\newblock \emph{Journal of Computational Physics}, 380:\penalty0 1--28, 2019.

\bibitem[Beskos et~al.(2008)Beskos, Roberts, Stuart, and Voss]{beskos2008mcmc}
A. Beskos, G. Roberts, A. Stuart, and J. Voss.
\newblock Mcmc methods for diffusion bridges.
\newblock \emph{Stochastics and Dynamics}, 8\penalty0 (03):\penalty0 319--350, 2008.

\bibitem[Ma et~al.(2022)Ma, Zhang, Zhu, and Feng]{ma2022accelerating}
H. Ma, L. Zhang, X. Zhu, and J. Feng.
\newblock Accelerating score-based generative models with preconditioned diffusion sampling.
\newblock In \emph{European conference on computer vision}, pages 1--16. Springer, 2022.

\bibitem[Ma et~al.(2025)Ma, Zhu, Feng, and Zhang]{ma2025preconditioned}
H. Ma, X. Zhu, J. Feng, and L. Zhang.
\newblock Preconditioned score-based generative models.
\newblock \emph{International Journal of Computer Vision}, pages 1--27, 2025.

\bibitem[Franzese et~al.(2024)Franzese, Corallo, Rossi, Heinonen, Filippone, and Michiardi]{franzese2024continuous}
G. Franzese, G. Corallo, S. Rossi, M. Heinonen, M. Filippone, and P. Michiardi.
\newblock Continuous-time functional diffusion processes.
\newblock \emph{Advances in Neural Information Processing Systems}, 36, 2024.

\bibitem[Franzese and Michiardi(2025)]{franzese2025generative}
G. Franzese and P. Michiardi.
\newblock Generative diffusion models in infinite dimensions: a survey.
\newblock \emph{Philosophical Transactions A}, 383\penalty0 (2299):\penalty0 20240322, 2025.

\bibitem[Lim et~al.(2023)Lim, Kovachki, Baptista, Beckham, Azizzadenesheli, Kossaifi, Voleti, Song, Kreis, Kautz, et~al.]{lim2023score}
J.~H. Lim, N.~B. Kovachki, R. Baptista, C. Beckham, K. Azizzadenesheli, J. Kossaifi, V. Voleti, J. Song, K. Kreis, J. Kautz, et~al.
\newblock Score-based diffusion models in function space.
\newblock \emph{arXiv preprint arXiv:2302.07400}, 2023.

\bibitem[Kerrigan et~al.(2022)Kerrigan, Ley, and Smyth]{kerrigan2022diffusion}
G. Kerrigan, J. Ley, and P. Smyth.
\newblock Diffusion generative models in infinite dimensions.
\newblock \emph{arXiv preprint arXiv:2212.00886}, 2022.

\bibitem[Lim et~al.(2024)Lim, YOON, Byun, Kang, Kim, Lee, and Choi]{lim2024score}
S. Lim, E.~B. YOON, T. Byun, T. Kang, S. Kim, K. Lee, and S. Choi.
\newblock Score-based generative modeling through stochastic evolution equations in hilbert spaces.
\newblock \emph{Advances in Neural Information Processing Systems}, 36, 2024.

\bibitem[Bond-Taylor and Willcocks(2023)]{bond2023infty}
S. Bond-Taylor and C.~G. Willcocks.
\newblock $\infty$-diff: Infinite resolution diffusion with subsampled mollified states.
\newblock \emph{arXiv preprint arXiv:2303.18242}, 2023.

\bibitem[Pidstrigach et~al.(2024)Pidstrigach, Marzouk, Reich, and Wang]{pidstrigach2024infinite}
J. Pidstrigach, Y. Marzouk, S. Reich, and S. Wang.
\newblock Infinite-dimensional diffusion models.
\newblock \emph{Journal of Machine Learning Research}, 25\penalty0 (414):\penalty0 1--52, 2024.

\bibitem[Wang(2008)]{wang2008karhunen}
L. Wang.
\newblock \emph{Karhunen-Loeve expansions and their applications}.
\newblock London School of Economics and Political Science (United Kingdom), 2008.

\end{thebibliography}



\newpage

\appendix

\section{Proofs of Section \ref{sec:gaussian}}

\subsection{Proof of Proposition \ref{prop:stationary-langevin}}\label{appendix:prop-3-1}

By Assumption \ref{assumption:score-error}, for the observed modes $j\leq N$, the preconditioned Langevin dynamics \eqref{eq:Langevin-SDE} reduces to the SDE
\begin{equation}
\begin{aligned}
dX_t^N =& -\left[ \operatorname*{Diag}\limits_{1\leq j \leq N}\left( s_j(\tau;\mu)\right) +  \operatorname*{Diag}\limits_{1\leq j \leq N}\left(\eps^a_j(\tau)\right) +    
C_N
\sigma^{-2} A_N^\top A_N \right] X_t^N dt\\
&
+ \left[  
C_N
\sigma^{-2} A_N^\top y -  \operatorname*{Diag}\limits_{1\leq j \leq N}\left(\eps_j^b(\tau)\right) \right] dt +
\sqrt{2C_N} dW^N_t, 
\end{aligned}
\label{eq:OU-obs}
\end{equation}
with $ C_N= \operatorname*{Diag}\limits_{1\leq j \leq N}\left(\lambda_j\right)$. For each unobserved mode $j>N$, we have  
\begin{align}
dX_t^{(j)} =& - \left[ s_j(\tau;\mu) +   \eps^a_j(\tau)\right] X^{(j)}_t  dt   - \eps_j^b(\tau) dt +\sqrt{2 
 \lambda_j} dW^{(j)}_t. 
 \label{eq:OU-unob}
\end{align}
Both \eqref{eq:OU-obs} and \eqref{eq:OU-unob} are Ornstein-Uhlenbeck (OU) processes. In particular,
$X^N_t \stackrel{t\to \infty}{\to} X^N_\infty$ in distribution, where the distribution of $X^N_\infty$ is the stationary distribution of \eqref{eq:OU-obs}:
\[ 
X^N_\infty \sim \mathcal{N}\left(\check{m}^N(\tau),\check{v}^N(\tau) \right),  
\]
with
\begin{align*}
&\check{m}^N(\tau)\\ & = \left[ \operatorname*{Diag}\limits_{1\leq j \leq N}\left( s_j(\tau;\mu)\right) +  \operatorname*{Diag}\limits_{1\leq j \leq N}\left(\eps^a_j(\tau)\right) +    
C_N
\sigma^{-2} A_N^\top A_N \right]^{-1} \! \! \left[  
C_N
\sigma^{-2} A_N^\top y -  \operatorname*{Diag}\limits_{1\leq j \leq N}\left(\eps_j^b(\tau)\right) \right] \\ 
& = \left[ C_N^{-1} \! \! \!\operatorname*{Diag}\limits_{1\leq j \leq N}\left( s_j(\tau;\mu)\right)\! + \! C_N^{-1}  \! \! \!\operatorname*{Diag}\limits_{1\leq j \leq N}\left(\eps^a_j(\tau)\right) \! + \!   
\sigma^{-2} A_N^\top A_N \right]^{-1}  \! \! \left[  
\sigma^{-2} A_N^\top y \!-  \!C_N^{-1}  \! \! \!\operatorname*{Diag}\limits_{1\leq j \leq N}\left(\eps_j^b(\tau)\right) \right],       
\end{align*}
and $\check{v}^N(\tau)$ is such that it solves the {Lyapunov equation}
\begin{align*}
&\left[ \operatorname*{Diag}\limits_{1\leq j \leq N}\left( s_j(\tau;\mu)\right) +  \operatorname*{Diag}\limits_{1\leq j \leq N}\left(\eps^a_j(\tau)\right) +    
C_N
\sigma^{-2} A_N^\top A_N \right] \check{v}^N(\tau) \\& + \check{v}^N(\tau) \left[ \operatorname*{Diag}\limits_{1\leq j \leq N}\left( s_j(\tau;\mu)\right) +  \operatorname*{Diag}\limits_{1\leq j \leq N}\left(\eps^a_j(\tau)\right) +    
C_N
\sigma^{-2} A_N^\top A_N \right]^\top  = 2C_N.    
\end{align*}
Then
\[
\check{v}^N(\tau) = \left[ C_N^{-1} \operatorname*{Diag}\limits_{1\leq j \leq N}\left( s_j(\tau;\mu)\right) +  C_N^{-1} \operatorname*{Diag}\limits_{1\leq j \leq N}\left(\eps^a_j(\tau)\right) +     
\sigma^{-2} A_N^\top A_N \right]^{-1},
\]
 since
\begin{align*}
&\left[ \operatorname*{Diag}\limits_{1\leq j \leq N}\left( s_j(\tau;\mu)\right) +  \operatorname*{Diag}\limits_{1\leq j \leq N}\left(\eps^a_j(\tau)\right) +    
C_N
\sigma^{-2} A_N^\top A_N \right] \check{v}^N(\tau)\\& = C_N \left( {\check{v}^N(\tau)}\right)^{-1} \check{v}^N(\tau) = C_N,
\end{align*}
and
\begin{align*}
&\check{v}^N(\tau) \left[ C_N^{-1} \operatorname*{Diag}\limits_{1\leq j \leq N}\left( s_j(\tau;\mu)\right) +  C_N^{-1} \operatorname*{Diag}\limits_{1\leq j \leq N}\left(\eps^a_j(\tau)\right) + \sigma^{-2} A_N^\top A_N \right] C_N  \\
&=  \check{v}^N(\tau)\left( {\check{v}^N(\tau)}\right)^{-1}C_N  = C_N.
\end{align*}
For each $j>N$, \eqref{eq:OU-unob} is a one-dimensional OU process with rate $s_j(\tau;\mu) + \eps_j^a(\tau)$, mean shift $-[s_j(\tau;\mu) + \eps_j^a(\tau)]^{-1}\eps_j^b(\tau)$, and noise $\sqrt{2\lambda_j}$. Hence 
$X^{(j)}_t \stackrel{t\to \infty}{\to} X^{(j)}_\infty$ in distribution, where the distribution of $X^{(j)}_\infty$ is the stationary  distribution of (\ref{eq:OU-unob})
\[
X^{(j)}_\infty \sim \mathcal{N} \left(-   \frac{\eps_j^b(\tau)}{s_j(\tau;\mu)   +  \eps^a_j(\tau)} ,   \frac{\lambda_j}{s_j(\tau;\mu) +  \eps^a_j(\tau)}  \right) .
\]
These results are valid as soon as $s_j(\tau;\mu) +  \eps^a_j(\tau)$ is a positive number for all $j$.

\subsection{Proof of Theorem \ref{thm:error-analysis}}\label{sec:thm:3-1}
For each mode $j$, define 
\[
\check{\mu}_j = \mu_j \left[e^{-\tau} + (1-e^{-\tau})p_j \right]. 
\]
The proof can be divided into two cases: one for the observed modes $j\leq N$, and one for the unobserved modes $j>N$. Since the KL divergence for the unobserved modes can be obtained by taking $A_N =0$ in the expression for the observed modes, we focus only on the latter. 

The marginal distributions of the $j$-th mode for the approximate and true posterior, for $j\leq N$, are given respectively by
\begin{align*}
\check{\pi}_y^{(j)}  = \mathcal{N}&\left( \left[ \frac{1}{\check{\mu}_j} + \sigma^{-2} (A^\top_N A_N)_{jj}   + \lambda_j^{-1} \eps_j^a(\tau) \right]^{-1}  \left[ \sigma^{-2} (A_N^\top y)_j  - \lambda_j^{-1} \eps_j^b(\tau) \right],\right.\\ &\left. \quad \! \! \left[ \frac{1}{\check{\mu}_j} + \sigma^{-2} (A^\top_N A_N)_{jj} + \lambda_j^{-1} \eps_j^a(\tau) \right]^{-1}\right) ,    
\end{align*}
and
\[
\pi_y^{(j)} = \mathcal{N}\left(\left[ \frac{1}{\mu_j}  +  \sigma^{-2}(A^\top_N A_N)_{jj}\right]^{-1}  \sigma^{-2} ( A_N^\top y)_j ,  \left[ \frac{1}{\mu_j}  +  \sigma^{-2}(A^\top_N A_N)_{jj}\right]^{-1} \right).
\]
The KL divergence between these two Gaussian distributions admits the explicit formula
\begin{align*}
 & \text{D}_\text{KL}\Big(\check{\pi}^{(j)}_y \,\Big|\Big|\, {\pi}^{(j)}_y \Big) \\ & =  \log \left( \left[ \frac{1}{\mu_j}  +  \sigma^{-2}(A^\top_N A_N)_{jj}\right]^{-1}  \left[ \frac{1}{\check{\mu}_j} + \sigma^{-2} (A_N^\top A_N)_{jj} + \lambda_j^{-1} \eps_j^a(\tau) \right]\right) \\
 & \quad - \frac{1}{2}  + \frac{1}{2} \left[ \frac{1}{\mu_j}  +  \sigma^{-2}(A^\top_N A_N)_{jj}\right]^{2} \left\{ \left[ \frac{1}{\check{\mu}_j} + \sigma^{-2} (A_N^\top A_N)_{jj} + \lambda_j^{-1} \eps_j^a(\tau) \right]^{-2} \right. \\& \quad \left. + \left( \left[ \frac{1}{\check{\mu}_j} + \sigma^{-2} (A^\top_N A_N)_{jj}   + \lambda_j^{-1} \eps_j^a(\tau) \right]^{-1}  \left[ \sigma^{-2} (A_N^\top y)_j - \lambda_j^{-1} \eps_j^b(\tau) \right] \right. \right. \\& \quad \left. \left. - \left[ \frac{1}{\mu_j}  +  \sigma^{-2}(A^\top_N A_N)_{jj}\right]^{-1}  \sigma^{-2}( A_N^\top y)_j \right)^2 \right\}   .  
\end{align*}
We study its limiting behavior as $\tau \to 0$. From the first term, we derive
\begin{equation}
\begin{aligned}
& \log \left( \left[ \frac{1}{\mu_j}  +  \sigma^{-2}(A^\top_N A_N)_{jj}\right]^{-1}  \left[ \frac{1}{\check{\mu}_j} + \sigma^{-2} (A_N^\top A_N)_{jj} + \lambda_j^{-1} \eps_j^a(\tau) \right]\right) \\
&=\log \left( \left[ 1  +  \sigma^{-2} \mu_j (A^\top_N A_N)_{jj}\right]^{-1}  \left[ \frac{1}{e^{-\tau}+(1-e^{-\tau})p_j} + \sigma^{-2} \mu_j (A_N^\top A_N)_{jj} + p_j^{-1} \eps_j^a(\tau) \right]\right)\\
& = \log \left( 1 + \frac{1}{1+\sigma^{-2}\mu_j(A^\top_N A_N)_{jj}} \left[ \frac{(1-e^{-\tau}) + (e^{-\tau} -1)p_j}{e^{-\tau} + (1-e^{-\tau}) p_j}  + p_j^{-1} \eps_j^a(\tau)\right] \right) \\
& = \log \left( 1 + \frac{1}{1+\sigma^{-2}\mu_j(A^\top_N A_N)_{jj}} \left[ -\tau(p_j-1) + p_j^{-1}\eps_j^a(\tau) + O(\tau^2) \right] \right) \\
& = \frac{1}{1+\sigma^{-2}\mu_j(A^\top_N A_N)_{jj}} \left( -\tau(p_j-1) + p_j^{-1}\eps_j^a(\tau) \right) + O(\tau^2).
\end{aligned}
\label{eq:T1-N}
\end{equation}
We now consider the other terms of the KL divergence. First, we derive
\begin{equation*}
\begin{aligned}
& \left[ \frac{1}{\check{\mu}_j} + \sigma^{-2} (A^\top_N A_N)_{jj}   + \lambda_j^{-1} \eps_j^a(\tau) \right]^{-1} \\ & \times  \left[ \sigma^{-2} (A_N^\top y)_j - \lambda_j^{-1} \eps_j^b(\tau) \right]  - \left[ \frac{1}{\mu_j}  +  \sigma^{-2}(A^\top_N A_N)_{jj}\right]^{-1}  \sigma^{-2} (A_N^\top y)_j\\
&  = \frac{\mu_j}{1+\sigma^{-2}\mu_j (A^\top_N A_N)_{jj}}  \Bigg\{ \Bigg[ 1 + \frac{1}{1+\sigma^{-2}\mu_j (A^\top_N A_N)_{jj}} \Bigg( \frac{(1-e^{-\tau}) + (e^{-\tau} -1)p_j}{e^{-\tau} + (1-e^{-\tau}) p_j} \\
& \quad  + p_j^{-1} \eps_j^a(\tau)\Bigg)  \Bigg]^{-1}   \left[ \sigma^{-2} (A_N^\top y)_j - \lambda_j^{-1} \eps_j^b(\tau) \right]  - \sigma^{-2} (A_N^\top y)_j   \Bigg\} \\
& = \frac{\mu_j}{1+\sigma^{-2}\mu_j (A^\top_N A_N)_{jj}}  \Bigg\{ \left[ 1 + \frac{1}{1+\sigma^{-2}\mu_j (A^\top_N A_N)_{jj}} \left( -\tau(p_j-1) + p_j^{-1}\eps^a_j(\tau) + O(\tau^2)  \right)  \right]^{-1} \\
&\quad \times \left[ \sigma^{-2} (A_N^\top y)_j - \lambda_j^{-1} \eps_j^b(\tau) \right]  - \sigma^{-2} (A_N^\top y)_j   \Bigg\} \\
& = \frac{\mu_j}{1+\sigma^{-2}\mu_j (A^\top_N A_N)_{jj}} \Bigg\{ \left[ 1 + \frac{1}{1+\sigma^{-2}\mu_j (A^\top_N A_N)_{jj}} \left( \tau(p_j-1) - p_j^{-1}\eps^a_j(\tau) + O(\tau^2)  \right)  \right] \\
&\quad \times \left[ \sigma^{-2} (A_N^\top y)_j - \lambda_j^{-1} \eps_j^b(\tau) \right]  - \sigma^{-2} (A_N^\top y)_j   \Bigg\}\\
& = \frac{\mu_j}{1+\sigma^{-2}\mu_j (A^\top_N A_N)_{jj}} \left\{  -\lambda_j^{-1} \eps_j^b(\tau) -\frac{\lambda_j^{-1} \eps_j^b(\tau)}{1+\sigma^{-2}\mu_j (A^\top_N A_N)_{jj}} \left( \tau(p_j-1) -p_j^{-1} \eps_j^a(\tau) \right)\right. \\
&\quad \left. + \frac{\sigma^{-2} (A_N^\top y)_j}{1+\sigma^{-2}\mu_j (A^\top_N A_N)_{jj}} \left( \tau(p_j-1) -p_j^{-1} \eps_j^a(\tau) \right)  \right\} + O(\tau^2).
\end{aligned}    
\end{equation*}
Taking the square yields
\begin{equation}
\begin{aligned}
& \left( \left[ \frac{1}{\check{\mu}_j} + \sigma^{-2} (A^\top_N A_N)_{jj}   + \lambda_j^{-1} \eps_j^a(\tau) \right]^{-1} \right. \\
&\times \left. \left[ \sigma^{-2} (A_N^\top y)_j + \lambda_j^{-1} \eps_j^b(\tau) \right] - \left[ \frac{1}{\mu_j}  +  \sigma^{-2}(A^\top_N A_N)_{jj}\right]^{-1}  \sigma^{-2} (A_N^\top y)_j\right)^2\\
& = \frac{\mu_j^2}{(1+\sigma^{-2}\mu_j (A^\top_N A_N)_{jj})^2}  \Bigg[  \lambda_j^{-2} \eps_j^b(\tau)^2 \\
&  \quad +  \frac{2\lambda_j^{-1} \eps_j^b(\tau)}{1+\sigma^{-2}\mu_j (A^\top_N A_N)_{jj}} \left( \lambda_j^{-1} \eps_j^b(\tau) -\sigma^{-2} (A_N^\top y)_j  \right)   \left( \tau(p_j-1) -p_j^{-1} \eps_j^a(\tau) \right)   \Bigg]  + O(\tau^2).
\end{aligned} 
\label{eq:T2-N}
\end{equation}
Next, we consider
\begin{equation}
\begin{aligned}
& \left[ \frac{1}{\check{\mu}_j} + \sigma^{-2} (A_N^\top A_N)_{jj} + \lambda_j^{-1} \eps_j^a(\tau) \right]^{-2}\\
&= \frac{\mu_j^2}{(1+\sigma^{-2}\mu_j (A^\top_N A_N)_{jj})^2} \!\left[ 1 + \frac{1}{1+\sigma^{-2}\mu_j (A^\top_N A_N)_{jj}} \left( -\tau(p_j-1) + p_j^{-1} \eps_j^a(\tau) + O(\tau^2)\right) \right]^{-2}\\
& = \frac{\mu_j^2}{(1+\sigma^{-2}\mu_j (A^\top_N A_N)_{jj})^2} \left[ 1 + \frac{2}{1+\sigma^{-2}\mu_j (A^\top_N A_N)_{jj}} \left( \tau(p_j-1) - p_j^{-1} \eps_j^a(\tau) \right) + O(\tau^2)\right]. 
\end{aligned}
\label{eq:T3-N}
\end{equation}
Putting \eqref{eq:T1-N}, \eqref{eq:T2-N}, and \eqref{eq:T3-N} together, we obtain
\begin{equation*}
\begin{aligned}
 &\text{D}_\text{KL}\Big(\check{\pi}^{(j)}_y \,\Big|\Big|\, {\pi}^{(j)}_y \Big)=   \frac{1}{2}\lambda_j^{-2} \eps_j^b(\tau)^2  \\
 &\quad -  \frac{\lambda_j^{-1} \eps_j^b(\tau)}{1+\sigma^{-2}\mu_j (A^\top_N A_N)_{jj}} \left(  \sigma^{-2} (A_N^\top y)_j  - \lambda_j^{-1} \eps_j^b(\tau)  \right) \left( \tau(p_j-1) -p_j^{-1} \eps_j^a(\tau) \right)  + O(\tau^2).
\end{aligned}
\end{equation*}


\section{Proofs of Section \ref{sec:preconditioning}}

\subsection{Proof of Proposition \ref{prop:reversion-rate}}\label{appendix:prop-4-1}
If we assume that $A^\top_N A_N$ is diagonal in $(v_j)$, for the observed modes $j\leq N$ the preconditioned Langevin dynamics \ref{eq:Langevin-SDE} becomes
\begin{align}
\nonumber
dX_t^{(j)} =& - \left[ s_j(\tau;\mu) +   \eps^a_j(\tau) + \lambda_j \sigma^{-2} (A^\top_N A_N)_{jj} \right] X^{(j)}_t  dt   - \left[ \lambda_j \sigma^{-2} (A^\top_N y)_j -\eps_j^b(\tau)\right]  dt \\ &+\sqrt{2 
 \lambda_j} dW^{(j)}_t, 
 \label{eq:OU-ob-2}
\end{align}
while for each unobserved mode $j>N$, one obtains
\begin{align}
dX_t^{(j)} =& - \left[ s_j(\tau;\mu) +   \eps^a_j(\tau)\right] X^{(j)}_t  dt   - \eps_j^b(\tau) dt +\sqrt{2 
 \lambda_j} dW^{(j)}_t. 
 \label{eq:OU-unob-2}
\end{align}
Let $m^{(j)}(t) = \mathbb{E}[X_t^{(j)}]$. For the observed modes $j\leq N$, taking the expectation in the SDE above gives the linear ODEs
\begin{align*}
\frac{dm^{(j)}}{dt} =& - \left[ s_j(\tau;\mu) +   \eps^a_j(\tau) + \lambda_j \sigma^{-2} (A^\top_N A_N)_{jj} \right] m^{(j)}(t)     + \left[\lambda_j \sigma^{-2} (A^\top_N y)_j -\eps_j^b(\tau)\right] 
\end{align*}
while for each unobserved mode $j>N$  one obtains
\begin{align*}
\frac{dm^{(j)}}{dt} =& - \left[ s_j(\tau;\mu) +   \eps^a_j(\tau)\right] m^{(j)}(t)    - \eps_j^b(\tau) . 
\end{align*}
Given $m^{(j)}(0)=m^{(j)}_0$, both have unique solution. For $j\leq N$,
\begin{align*}
m^{(j)}(t) =& \left( m^{(j)}(0) - \frac{\lambda_j \sigma^{-2} (A^\top_N y)_j -\eps_j^b(\tau)}{s_j(\tau;\mu) +   \eps^a_j(\tau) + \lambda_j \sigma^{-2} (A^\top_N A_N)_{jj}}\right) e^{-\left[s_j(\tau;\mu) +   \eps^a_j(\tau) + \lambda_j \sigma^{-2} (A^\top_N A_N)_{jj}\right]t} \\& + \frac{\lambda_j \sigma^{-2} (A^\top_N y)_j -\eps_j^b(\tau)}{s_j(\tau;\mu) +   \eps^a_j(\tau) + \lambda_j \sigma^{-2} (A^\top_N A_N)_{jj}},
\end{align*}
which for $t\to \infty$ decays exponentially fast to the mean 
\[
\frac{\lambda_j \sigma^{-2} (A^\top_N y)_j -\eps_j^b(\tau)}{s_j(\tau;\mu) +   \eps^a_j(\tau) + \lambda_j \sigma^{-2} (A^\top_N A_N)_{jj}},
\]
with rate
\begin{align*}
\kappa^{(j)} & = s_j(\tau;\mu) +   \eps^a_j(\tau) + \lambda_j \sigma^{-2} (A^\top_N A_N)_{jj} \\ &
= \lambda_j   \left(  \left[e^{-\tau} \mu_j + (1-e^{-\tau}) \lambda_j\right]^{-1}   +   \sigma^{-2} 
(A_N^\top A_N)_{jj}
+  \lambda_j^{-1} \eps^a_j(\tau)  \right).
\end{align*}
For the unobserved modes $j>N$,
\begin{align*}
m^{(j)}(t) = \left( m^{(j)}(0) + \frac{\eps_j^b(\tau)}{s_j(\tau;\mu) +   \eps^a_j(\tau) }\right) e^{-\left[s_j(\tau;\mu) +   \eps^a_j(\tau) \right]t}  - \frac{ \eps_j^b(\tau)}{s_j(\tau;\mu) +   \eps^a_j(\tau) },
\end{align*}
which converge to the mean 
\[
 -\frac{ \eps_j^b(\tau)}{s_j(\tau;\mu) +   \eps^a_j(\tau) },
\]
with rate
\begin{align*}
\kappa^{(j)} & = s_j(\tau;\mu) +   \eps^a_j(\tau) \\ & = \lambda_j  \big[ 
\left[e^{-\tau} \mu_j + (1-e^{-\tau}) \lambda_j\right]^{-1}  +    \lambda_j^{-1} \eps^a_j(\tau) \big].
\end{align*}

\subsection{Proof of Theorem \ref{thm:preconditioner}}\label{sec:B-2}

We consider only the case of the observed modes $j\leq N$, since the case for the unobserved modes $j>N$ follows directly by setting $A_N =0$.

By Proposition \ref{prop:reversion-rate}, ensuring  uniform convergence rate for \eqref{eq:Langevin-SDE} using an approximate score function---as described in Assumption \ref{assumption:score-error}---amounts to solving the equation
\begin{equation}
\frac{\lambda_j}{\check{\mu}_j} + \lambda_j \sigma^{-2} (A^\top_N A_N)_{jj} + \eps_j^a(\tau) = 1, 
\label{eq:uniform-eq}
\end{equation}
where $\check{\mu}_j = \mu_j \left[e^{-\tau} + (1-e^{-\tau})p_j \right]$. 

Assume the expansions 
\[ \lambda_j = \lambda_j^{(0)} + \lambda_j^{(1)} \tau + O(\tau^2),\qquad \eps_j^a(\tau) = \eps_j^a \tau + O(\tau^2).\]
Then we compute
\[
\check{\mu}_j = \mu_j + \mu_j(p_j-1) \tau + O(\tau^2) = \mu_j + \lambda_j^{(0)} \tau -\mu_j \tau  +O(\tau^2),
\] 
using that $p_j = \lambda_j/\mu_j $
Substituting into \eqref{eq:uniform-eq}, we obtain that
$\lambda_j^{(0)}$ and $\lambda_j^{(1)}$ must satisfy
\begin{align*}
&\mu_j^{-1} ( {\lambda_j^{(0)}} + {\lambda_j^{(0)}} [1-\mu_j^{-1}\lambda_j^{(0)}]\tau + \lambda_j^{(1)}  \tau )  + \sigma^{-2} (A_N^\top A_N)_{jj} ( {\lambda_j^{(0)}} + \lambda_j^{(1)}  \tau )
+ \eps_j^a  \tau + O(\tau^2) = 1.
\end{align*}
Rearranging the terms, we get
${\lambda_j^{(0)}} \mu_j^{-1} + {\lambda_j^{(0)}} \sigma^{-2}(A^\top_N A_N)_{jj} =1$,
which gives
\[
\lambda_j^{(0)}=  \left[ \mu_j^{-1} + \sigma^{-2} (A^\top_N A_N)_{jj} \right]^{-1},
\]
and
\begin{align*}
\mu_j^{-1} \left( - {\lambda_j^{(0)}} \left[\mu_j^{-1}\lambda_j^{(0)}-1\right] + {\lambda_j^{(1)}} \right)  +\sigma^{-2} (A^\top_N A_N)_{jj} {\lambda_j^{(1)}}   + \eps_j^a  = 0,    
\end{align*}
yielding
\begin{align*}
{\lambda_j^{(1)}}  = \lambda_j^{(0)}  \left(  \mu_j^{-1}\lambda_j^{(0)} \left(\mu_j^{-1}\lambda_j^{(0)}-1\right) - \eps_j^a  \right).
\end{align*}

\section{Non-Gaussian Sampling: Technical Details}\label{sec:3:appendix}

\subsection{Proof of Proposition \ref{prop:score-phi}}
By $\Phi(X) = \sum_j \phi_j (X^{(j)})$,  the prior $\mu$ has Radon-Nikodym derivative with respect to the Gaussian $\mathcal{N}(0,C_\mu)$ given by
\[
\frac{d\mu}{d\mathcal{N}(0,C_\mu)} (X) \propto \prod_{j} \exp\left(-\phi_j(X^{(j)})\right).
\]
Since $C$ and $C_\mu$ are both diagonalized by the same basis $(v_j)$, the prior factorizes as a product of independent one-dimensional marginals in the coordinates $X^{(j)}$:
\[
\frac{d\mu}{d\mathcal{N}(0,C_\mu)} (X) = \prod_j \frac{d\mu^{(j)}}{d\mathcal{N}(0,\mu_j)} (X^{(j)}), 
\]
where
\[
\frac{d\mu^{(j)}}{d\mathcal{N}(0,\mu_j)} (X^{(j)}) \propto \exp\left(-\phi_j(X^{(j)})\right).
\]

We can then work mode by mode. For each $j$, define the one-dimensional OU process 
\[
\widetilde{X}_0^{(j)} \sim \mathcal{N}(0,\mu_j), \qquad \widetilde{X}_\tau^{(j)} = e^{-\tau/2} \widetilde{X}_0^{(j)} + \sqrt{1-e^{-\tau}} \xi^{(j)}, 
\]
with $\xi^{(j)} \sim \mathcal{N}(0,\lambda_j)$ independent of $\widetilde{X}_0^{(j)} $. Notice that
\begin{equation}
\begin{pmatrix}
\widetilde{X}^{(j)}_0\\\widetilde{X}^{(j)}_\tau    
\end{pmatrix}
\sim 
\mathcal{N}
\left( 0, 
\begin{pmatrix}
\mu_j & e^{-\tau/2} \mu_j\\
e^{-\tau/2} \mu_j & e^{-\tau} \mu_j +(1-e^{-\tau}) \lambda_j 
\end{pmatrix}
\right).
\end{equation}
The OU transition kernel is
\[
\tilde{p}(\widetilde{X}_\tau^{(j)} = x_\tau \mid \widetilde{X}_0^{(j)} = x_0) = \mathcal{N}(e^{-\tau/2} x_0, (1-e^{-\tau})\lambda_j) (x_\tau),
\]
where ${\cal N}(\mu,\sigma^2)(x)$ is the density at $x$ of the normal distribution with mean $\mu$ and variance $\sigma^2$.
We push forward the prior $e^{-\phi_j} \mathcal{N}(0,\mu_j)$  through the OU kernel. Mode by mode, its density is
\begin{align*}
\mu_\tau^{(j)}(x_\tau) &\propto  \int  \exp(-\phi_j(x_0)) \mathcal{N}(0,\mu_j)(x_0) \tilde{p} (x_\tau \mid x_0) dx_0 \\
& = \int  \exp(-\phi_j(x_0)) \tilde{p}_{0,\tau} (x_0,x_\tau) d x_0, 
\end{align*}
where $\tilde{p}_{0,\tau} (x_0,x_\tau)$ denotes the joint density of $(\widetilde{X}_0^{(j)}, \widetilde{X}_\tau^{(j)})$. Let $\check{\mu}_j = e^{-\tau} \mu_j + (1-e^{-\tau})\lambda_j$. 
Dividing by the marginal Gaussian density of $\widetilde{X}_\tau^{(j)}$, we get 
\begin{align*}
\frac{\mu_\tau^{(j)}(x_\tau)}{\mathcal{N}(0,\check{\mu}_j )(x_\tau)} & \propto  \int  \exp(-\phi_j(x_0)) \frac{\tilde{p}_{0,\tau} (x_0,x_\tau)}{\tilde{p}_{\tau} (x_\tau)} d x_0 \\
& =  \int  \exp(-\phi_j(x_0))  \tilde{p} (x_0 \mid x_\tau) d x_0\\
& = \mathbb{E}[\exp(-\phi_j(\widetilde{X}_0^{(j)})) \mid \widetilde{X}^{(j)}_\tau = x_\tau].
\end{align*}
Since $ 
S^{(j)}(x_\tau,\tau; \mu_j) = \lambda_j \partial_j \log\frac{ \mu^{(j)}_\tau (x_\tau)}{\mathcal{N}(0, \check{\mu}_j)} + \lambda_j \mathcal{N}(0, \check{\mu}_j)(x_\tau), 
 $ we obtain
 
\[
S^{(j)}(x_\tau, \tau; \, \mu)  =  \lambda_j \partial_j \log \mathbb{E}[\exp(-\phi_j(\widetilde{X}_0^{(j)})) \mid \widetilde{X}^{(j)}_\tau = x_\tau]  -  
\frac{\lambda_j}{e^{-\tau} \mu_j + (1-e^{-\tau})\lambda_j} X^{(j)}.
\]

\subsection{Proof of Proposition \ref{prop:stationary-phi}}

For each mode $j$, define 
\[ \check{\phi}_j(X^{(j)},\tau) = - \log   \mathbb{E} \big[ \exp(-\phi_j(\widetilde{X}^{(j)}_0))\mid \widetilde{X}^{(j)}_\tau = X^{(j)} \big].
\]
Under Assumption \ref{assumption:score-error-phi}, the first $N$ coordinates of the preconditioned Langevin dynamics \eqref{eq:Langevin-SDE} corresponding to the observed modes $j \leq N$ satisfy
\begin{equation} 
\begin{aligned}
dX_t^N  =& - \left[ \operatorname*{Diag}\limits_{1\leq j \leq N}\left( 
 s_j(\tau;\mu)
 \right) + 
 C_N 
 \sigma^{-2}A_N^\top A_N +  \operatorname*{Diag}\limits_{1\leq j \leq N}\left(\eps_j^a(\tau)\right) \right]  X_t^N dt \\& 
 + \left[  
C_N \sigma^{-2} A_N^\top y - 
C_N \operatorname*{Diag}\limits_{1\leq j \leq N}\left(  \partial_j \check{\phi}_j(X^{(j)}_t)\right)  -   \operatorname*{Diag}\limits_{1\leq j \leq N}\left(\eps^a_j(\tau) \partial_j \phi_j (X^{(j)}_t)\right) \right. \\& \left. - \operatorname*{Diag}\limits_{1\leq j \leq N}\left(\eps_j^b(\tau)\right)\right] dt   
+
\sqrt{2C_N} dW^N_t.
\end{aligned}
\label{eq:prec-SDE-phi-obs}
\end{equation}
The SDE for the unobserved modes $j>N$ can be obtained by taking $A_N=0$  above:
\begin{equation}
\begin{aligned}
dX_t^{(j)}  =& - \left[ 
 s_j(\tau;\mu)
   +   \eps_j^a(\tau)  \right]  X_t^{(j)} dt   
 \\&  + \left[   - 
\lambda_j \partial_j \check{\phi}_j (X^{(j)}_t) -    \eps^a_j(\tau) \partial_j \phi_j (X^{(j)}_t)  -  \eps_j^b(\tau) \right] dt 
+
\sqrt{2\lambda_j} dW^{(j)}_t.
\end{aligned}
\label{eq:prec-SDE-phi-unobs}
\end{equation}
Both \eqref{eq:prec-SDE-phi-obs} and \eqref{eq:prec-SDE-phi-unobs} are preconditioned overdamped Langevin SDEs. In particular, one checks that \eqref{eq:prec-SDE-phi-obs} can be written as
\[
dX^N_t = -C_N \nabla U_N(X_t^N) dt + \sqrt{2C_N} dW_t^N,
\]
where the potential $U_N$ is
\begin{align*}
U_N(X^N) = & \frac{1}{2}{ X^N}^\top \left[ C_N^{-1} \operatorname*{Diag}\limits_{1\leq j \leq N}\left( 
 s_j(\tau;\mu)
 \right) +  
 \sigma^{-2}A_N^\top A_N +  C_N^{-1} \operatorname*{Diag}\limits_{1\leq j \leq N}\left(\eps_j^a(\tau)\right) \right] X^N \\ 
 & - \left[  
 \sigma^{-2} A_N^\top y  -  C_N^{-1}\operatorname*{Diag}\limits_{1\leq j \leq N}\left(\eps_j^b(\tau)\right)\right]X^N + 
\sum_{j=1}^N \left[  \check{\phi}_j(X^{(j)}_t)  -   \lambda_j^{-1} \eps^a_j(\tau)  \phi_j (X^{(j)}_t) \right].
\end{align*}
Its stationary distribution is $\check{\pi}_y^N$, which is absolutely continuous with respect to the Lebesgue measure over $\mathbb{R}^N$:
\[
\frac{d\check{\pi}_y^N(X^N)}{dX^N} \propto \exp(-  U_N(X_N) ). 
\]
We split $U_N$ into quadratic and non-quadratic terms. Hence
\[
\frac{d\check{\pi}_y^N(X^N)}{dX^N} \propto \exp\left(- \check{\Phi}^N(X^N,\tau) \right) {\cal N}(\check{m}^N(\tau), \check{v}^N(\tau))(X^N),
\]
where ${\cal N}(\check{m}^N(\tau), \check{v}^N(\tau))(X_N)$ is the density at $X_N$ of the multivariate Gaussian with mean $\check{m}^N(\tau)$ and covariance $ \check{v}^N(\tau)$, with
\begin{align*}
& \check{\Phi}^N(X^N,\tau) = \sum_{j=1}^N \left[ \check{\phi}_j(X^{(j)},\tau) +\lambda_j^{-1} \eps_j^a \phi_j (X_t^{(j)}) \right],\\
& \check{v}^N (\tau) = \left[ 
C_{N}^{-1} \operatorname*{Diag}\limits_{1\leq j \leq N}\left( s_j(\tau;\mu) \right) + \sigma^{-2} A_N^\top A_N + C_N^{-1} \operatorname*{Diag}\limits_{1\leq j \leq N} \left(  \eps_j^a(\tau)\right)  \right]^{-1},\\
&\check{m}^N(\tau)  = \check{v}^N(\tau)  \left[ \sigma^{-2} A_N^\top y  - C_N^{-1} \operatorname*{Diag}\limits_{1\leq j \leq N}\left(   \eps_j^b(\tau)\right) \right].
\end{align*}
By the same argument, for each $j>N$, the one-dimensional potential of \eqref{eq:prec-SDE-phi-unobs} is
\begin{align*}
U_j(X^{(j)})\! = \! \left[ 
 \lambda_j^{-1}s_j(\tau;\mu)
   +    \lambda_j^{-1}\eps_j^a(\tau)  \right]  \frac{{X_t^{(j)}}^2}{2}   
  +   \lambda_j^{-1}\eps_j^b(\tau) X_t^{(j)} +   
  \check{\phi}_j (X^{(j)}_t) +     \lambda_j^{-1}\eps^a_j(\tau)  \phi_j (X^{(j)}_t) . 
\end{align*}
Its stationary distribution is therefore
\[
\frac{d\check{\pi}_y^{(j)}(X^{(j)})}{dX^{(j)}} \propto \exp\left(- \check{\Phi}^{(j)}(X^{(j)},\tau) \right) {\cal N}(\check{m}^{(j)}(\tau), \check{v}^{(j)}(\tau))(X^{(j)} ),
\]
where ${\cal N}(\check{m}^{(j)}(\tau), \check{v}^{(j)}(\tau))(X^{(j)}) $ is the density at $X^{(j)} $ of the multivariate Gaussian with mean $\check{m}^{(j)}(\tau)$ and covariance $ \check{v}^{(j)}(\tau)$, with
\begin{align*}
& \check{\Phi}^{(j)}(X^{(j)},\tau) =  \check{\phi}_j(X^{(j)},\tau) +\lambda_j ^{-1} \eps_j^a \phi_j (X_t^{(j)}),\\
& \check{v}^{(j)} (\tau) = \left[ 
\lambda_j^{-1}  s_j(\tau;\mu)   + \lambda_j^{-1} \eps_j^a(\tau)   \right]^{-1},\\
&\check{m}^{(j)}(\tau)  = -\check{v}^{(j)}(\tau)   {\lambda_j}^{-1} \eps_j^b(\tau) .
\end{align*}

\subsection{Error Analysis in the Non-Gaussian Setting}\label{sec:non-gaussian-2}
For the sake of completeness, we present a result analogous to Theorem \ref{thm:error-analysis} for the non-Gaussian case. As the interested reader will notice, the calculations are significantly more involved, but remain relatively straightforward.

\begin{theorem}\label{thm:non-gaussian}
We assume $\eps_j^a(\tau) = O(\tau)$, $\eps_j^b(\tau) = O(1)$, and $A_N=\operatorname{Diag}_{1\leq j \leq N} (A_{jj})$. 
The Kullback-Leibler divergence between $\check{\pi}^{(j)}_y$ and ${\pi}^{(j)}_y$ is given by
\begin{align*}
& \textup{D}_\textup{KL}\Big(\check{\pi}^{(j)}_y \,\Big|\Big|\, {\pi}^{(j)}_y \Big) = B_j(\tau) + E_j(\tau),
\end{align*}
where $B_j(\tau)$ is a bias term given by
\begin{align*}
B_j(\tau) = & -2  \lambda_j^{-1} \eps_j^b(\tau)  \mathbb{E}_{\check{\pi}_y^{(j)}}\left[ x\right] + \lambda_j^{-1} p_j^{-1} (\eps_j^b(\tau))^2+\log  \int e^{-\phi_j(z)-\frac{1}{2 \sigma^2} \left[ A_{jj} z - y_j\right]^2} \mathcal{N}(0,\mu_j)(z) dz  \\
&- \log  \int e^{-\phi_j(z)-\frac{1}{2 \sigma^2} \left[ A_{jj} z - y_j\right]^2} \mathcal{N}(-p_j^{-1}\eps_j^b(\tau),\mu_j)(z) dz , 
\end{align*}
and $E_j(\tau)$ is an error term
\begin{align*}
E_j(\tau) = E_j^{(1)}(\tau) \tau + E_j^{(2)}(\tau) \lambda_j^{-1} \eps_j^a(\tau) + O(\tau^{3/2}), 
\end{align*}
where 
\begin{align*}
& E_j^{(1)}(\tau) = \left( \mathbb{E}_{\check{\pi}_y^{(j)}}\left[\frac{x^2}{\mu_j}\right] - \lambda_j^{-1} p_j^{-1}\eps_j^b(\tau)^2   \right) (1-p_j)  + \frac{1}{2}\mathbb{E}_{\check{\pi}_y^{(j)}} \left[ \lambda_j \left( \phi_j'(x)^2 - \phi_j''(x)\right) \right. \\
&  \quad \left. -\phi_j'(x) (1-2p_j)x   \right] - \mathcal{Z}(\eps_j^b(\tau),A_{jj},y_j)^{-1}   \int e^{-\phi_j(z)-\frac{1}{2 \sigma^2} \left[ A_{jj} z - y_j\right]^2} \mathcal{N}(-p_j^{-1}\eps_j^b(\tau),\mu_j)(z)  \\& \quad \times \Bigg[ \frac{1}{2}\Big(\lambda_j (\phi'(z)^2 -\phi''_j(z))     - \phi'_j(z) (1-2p_j) z \Big) + 
	\Bigg( \frac{z+p_j^{-1} \eps_j^b(\tau)}{\mu_j} \\& \quad  +   \frac{(z+p_j^{-1} \eps_j^b(\tau))^2}{\mu_j}-\frac{1}{2}\Bigg) (1-p_j)\Bigg]  dz, 
\end{align*}
and
\begin{align*}
& E_j^{(2)}(\tau) =  \mathbb{E}_{\check{\pi}_y^{(j)}}\left[x^2  - \phi_j(x)\right]   \\
& \quad  - p_j^{-2} \eps_j^b(\tau)^2 - \mathcal{Z}(\eps_j^b(\tau),A_{jj},y_j)^{-1}  \int e^{-\phi_j(z)-\frac{1}{2 \sigma^2} \left[ A_{jj} z - y_j\right]^2} \mathcal{N}(-p_j^{-1}\eps_j^b(\tau),\mu_j)(z) \\
& \quad \times \Bigg[ - \phi_j(z)  + 
	 z+p_j^{-1} \eps_j^b(\tau)  +   \left(z+p_j^{-1} \eps_j^b(\tau)\right)^2  -\frac{\mu_j}{2}  \Bigg]  dz,
\end{align*}    
with 
\[
\mathcal{Z}(\eps_j^b(\tau),A_{jj},y_j) =  \int e^{-\phi_j(z)-\frac{1}{2 \sigma^2} \left[ A_{jj} z - y_j\right]^2} \mathcal{N}(-p_j^{-1}\eps_j^b(\tau),\mu_j)(z) dz .
\]
\end{theorem}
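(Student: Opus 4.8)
The plan is to reduce everything to a one-dimensional computation, mode by mode, and then expand in the small parameters $\tau$ and $\lambda_j^{-1}\eps_j^a(\tau)$ (recall $\eps_j^a(\tau)=O(\tau)$, $\eps_j^b(\tau)=O(1)$). By Proposition~\ref{prop:stationary-phi}, the $j$-th mode marginal of the approximate posterior has density (w.r.t.\ Lebesgue) proportional to $\exp(-\check\phi_j(x,\tau)-\lambda_j^{-1}\eps_j^a(\tau)\phi_j(x))\,\mathcal N(\check m_j(\tau),\check v_j(\tau))(x)$, while the true posterior marginal $\pi_y^{(j)}$ is proportional to $\exp(-\phi_j(x)-\tfrac1{2\sigma^2}(A_{jj}x-y_j)^2)\,\mathcal N(0,\mu_j)(x)$. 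Writing $\textup{D}_\textup{KL}(\check\pi_y^{(j)}\,\|\,\pi_y^{(j)})=\mathbb{E}_{\check\pi_y^{(j)}}[\log(\mathrm d\check\pi_y^{(j)}/\mathrm d\pi_y^{(j)})]$, the ratio of densities is an explicit product of Gaussian densities and the two Gibbs factors, so the log-ratio splits into (i) a difference of the potentials $-\check\phi_j(x,\tau)-\lambda_j^{-1}\eps_j^a(\tau)\phi_j(x)+\phi_j(x)+\tfrac1{2\sigma^2}(A_{jj}x-y_j)^2$, (ii) the log-ratio of the two Gaussians $\mathcal N(\check m_j,\check v_j)$ and $\mathcal N(0,\mu_j)$, and (iii) a difference of log normalizing constants. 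The normalizing-constant difference is exactly where the bias term $B_j(\tau)$ comes from: the true posterior's constant is $\int e^{-\phi_j(z)-\frac1{2\sigma^2}(A_{jj}z-y_j)^2}\mathcal N(0,\mu_j)(z)\,dz$, and after absorbing the Gaussian $\mathcal N(\check m_j(\tau),\check v_j(\tau))$ (whose variance $\check v_j(\tau)=\mu_j+O(\tau)$ and mean $\check m_j(\tau)=-p_j^{-1}\eps_j^b(\tau)+O(\tau)$) one recognizes $\mathcal N(-p_j^{-1}\eps_j^b(\tau),\mu_j)$ at leading order, giving the two $\log\int$ terms in $B_j(\tau)$; the quadratic-in-$\eps_j^b$ pieces $-2\lambda_j^{-1}\eps_j^b(\tau)\mathbb{E}_{\check\pi_y^{(j)}}[x]+\lambda_j^{-1}p_j^{-1}\eps_j^b(\tau)^2$ come from completing the square in the Gaussian log-ratio.

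The key analytic input is a small-$\tau$ expansion of $\check\phi_j(x,\tau)$. Here I would use Proposition~\ref{prop:score-phi}: $\check\phi_j(x,\tau)=-\log\mathbb{E}[e^{-\phi_j(\widetilde X_0^{(j)})}\mid \widetilde X_\tau^{(j)}=x]$ where $(\widetilde X_0^{(j)},\widetilde X_\tau^{(j)})$ is the explicit bivariate Gaussian of that proposition. As $\tau\to0$ the conditional law of $\widetilde X_0^{(j)}$ given $\widetilde X_\tau^{(j)}=x$ concentrates at $x$ with conditional variance $O(\tau)$; a Laplace/Gaussian-moment expansion then yields $\check\phi_j(x,\tau)=\phi_j(x)+c_1(x)\tau+o(\tau)$ for an explicit $c_1(x)$ built from $\phi_j'(x)$, $\phi_j''(x)$, $x$, $\mu_j$, $\lambda_j$, $p_j$. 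Concretely, writing $\widetilde X_0^{(j)}=\alpha x+\sqrt{\beta}\,\xi$ with $\alpha=e^{-\tau/2}\mu_j/\check\mu_j$, $\beta=(\text{Schur complement})=O(\tau)$, and Taylor-expanding $\phi_j(\alpha x+\sqrt\beta\xi)$ to second order, one gets the half-Laplacian-type correction $\tfrac12(\phi_j'(x)^2-\phi_j''(x))$ times $\lambda_j$, plus a transport term $-\phi_j'(x)(1-2p_j)x$ from $\alpha x-x$; this is precisely the structure visible inside $E_j^{(1)}(\tau)$. Similarly the $\partial_j\check\phi_j$ appearing in Proposition~\ref{prop:stationary-phi} is differentiated term by term. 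I would also expand $\check v_j(\tau)=\mu_j(1+(p_j-1)\tau)+O(\tau^2)$ (from $\check\mu_j=\mu_j+\mu_j(p_j-1)\tau+O(\tau^2)$ and the relation $\check v_j=[\lambda_j^{-1}s_j+\lambda_j^{-1}\eps_j^a]^{-1}$) and $\check m_j(\tau)=-\check v_j(\tau)\lambda_j^{-1}\eps_j^b(\tau)$, both from (\ref{expres:vjmj}).

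Assembling the pieces: substitute the expansions into the three parts of $\mathbb{E}_{\check\pi_y^{(j)}}[\log$-ratio$]$, collect the $\tau^0$ contributions into $B_j(\tau)$ and the $\tau^1$ and $\lambda_j^{-1}\eps_j^a(\tau)$ contributions into $E_j(\tau)=E_j^{(1)}(\tau)\tau+E_j^{(2)}(\tau)\lambda_j^{-1}\eps_j^a(\tau)+O(\tau^{3/2})$, identifying $E_j^{(2)}(\tau)$ as the coefficient of $\lambda_j^{-1}\eps_j^a(\tau)$, which multiplies the single extra $\phi_j$ term in the potential (hence the $\mathbb{E}_{\check\pi_y^{(j)}}[x^2-\phi_j(x)]$ minus its analogue under $\mathcal Z^{-1}(\cdots)\,\mathrm dz$). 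The $O(\tau^{3/2})$ rather than $O(\tau^2)$ remainder comes from cross terms between $\eps_j^b(\tau)=O(1)$ and the $O(\sqrt\tau)$ fluctuations in the Laplace expansion, or from $\eps_j^a(\tau)\cdot\tau$-type products. The main obstacle I anticipate is bookkeeping rather than conceptual: carefully tracking which expectation is taken under $\check\pi_y^{(j)}$ versus under the tilted Gaussian measure $\mathcal Z^{-1}e^{-\phi_j-\frac1{2\sigma^2}(A_{jj}z-y_j)^2}\mathcal N(-p_j^{-1}\eps_j^b(\tau),\mu_j)\,\mathrm dz$ (the two differ at $O(\tau)$, so this distinction matters only inside the leading bias and cancels in the error terms up to the stated order), and making sure the Laplace expansion of $\log\mathbb{E}[e^{-\phi_j(\widetilde X_0^{(j)})}\mid\cdot]$ is uniform enough in $x$ to be integrated against $\check\pi_y^{(j)}$ — this requires mild growth/regularity assumptions on $\phi_j$ (e.g.\ $\phi_j\in C^2$ with at most quadratic growth of derivatives), which I would state explicitly, and then the dominated-convergence argument to pass the expansion under the expectation is routine.
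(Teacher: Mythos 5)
Your proposal follows essentially the same route as the paper's proof: the same three-way split of $\textup{D}_\textup{KL}(\check{\pi}^{(j)}_y\,\|\,\pi^{(j)}_y)$ into the Gaussian log-ratio, the potential difference $(1-\lambda_j^{-1}\eps_j^a)\phi_j-\check\phi_j$, and the log normalizing constants, combined with the small-$\tau$ Laplace expansion of $\check\phi_j(x,\tau)$ through the conditional Gaussian with mean $m_\tau(x)=[1+(\tfrac12-p_j)\tau+\cdots]x$ and variance $v_\tau=\lambda_j\tau(1+O(\tau))$, and the expansions of $\check v_j(\tau)$, $\check m_j(\tau)$ from \eqref{expres:vjmj}. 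Your identification of where $B_j$, $E_j^{(1)}$, $E_j^{(2)}$ and the $O(\tau^{3/2})$ remainder arise matches the paper's computation, so the plan is correct as stated.
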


\begin{proof}
In the following $\mathbb{E}_{\check{\pi}_y^{(j)}}\left[ \psi \right]$ and $\mathbb{E}_{\check{\pi}_y^{(j)}}\left[ \psi(x)\right]$ stand for $\int \psi(x) d\check{\pi}_y^{(j)}(x)$.
Recall that for $j\leq N$ the $j$-th mode marginal of the approximate posterior distribution $\check{\pi}_y$ is
\begin{align*}
	d\check{\pi}_y^{(j)}(X^{(j)}) = & \frac{1}{Z_{ \check{\pi}^{(j)}_y}}
	\exp\left( - \check{\phi}_j (X^{(j)}) - \lambda_j^{-1} \eps^a_j(\tau) \phi_j(X_t^{(j)}) - \frac{1}{2\sigma^2} \left[A_{jj} X^{(j)} - y_j \right]^2 \right) \\ &  \times d \mathcal{N}\left( -\left[ \frac{1}{\check{\mu}_j} + \lambda_j^{-1} \eps_j^a(\tau) \right]^{-1}  \lambda_j^{-1} \eps_j^b(\tau),  \left[ \frac{1}{\check{\mu}_j} + \lambda_j^{-1} \eps_j^a(\tau) \right]^{-1}\right) (X^{(j)}),    
\end{align*}
while the true posterior is
\[
d\pi^{(j)}_y(X^{(j)}) =\frac{1}{Z_{ \check{\pi}^{(j)}_y}} \exp\left(-\phi_j(X^{(j)})- \frac{1}{2\sigma^2} \left[A_{jj} X^{(j)} - y_j \right]^2 \right) d\mathcal{N}(0, \mu_j).
\]
For the unobserved modes $j>N$, we set $A_{jj}=0$. 
For each $j$, we have 
\begin{align*}
	&\text{D}_\text{KL}\Big(\check{\pi}^{(j)}_y \,\Big|\Big|\, {\pi}^{(j)}_y \Big)\\ 
    & = \underbrace{\mathbb{E}_{\check{\pi}_y^{(j)}}\left[ \log \mathcal{N}\left( -\left[ \frac{1}{\check{\mu}_j}+  \lambda_j^{-1} \eps_j^a(\tau) \right]^{-1}  \lambda_j^{-1} \eps_j^b(\tau),  \left[ \frac{1}{\check{\mu}_j} + \lambda_j^{-1} \eps_j^a(\tau) \right]^{-1}\right) - \log \mathcal{N}(0,{\mu}_j)\right]}_{\substack{\text{first  term}\\ \text{}}}\\
     & \quad + \underbrace{\mathbb{E}_{\check{\pi}^{(j)}_y} \Big([1 -\lambda_j^{-1} \eps^a_j(\tau) ] {\phi}_j - \check{\phi}_j  \Big)}_{\text{second term}}+  \underbrace{\log  \frac{Z_{ {\pi}^{(j)}_y}}{Z_{ \check{\pi}^{(j)}_y}}}_{\text{third term}}  .
\end{align*}
\paragraph{First term} We derive
\begin{align*}
	&\log \mathcal{N}\left( - \left[ \frac{1}{\check{\mu}_j} + \lambda_j^{-1} \eps_j^a(\tau) \right]^{-1}  \lambda_j^{-1} \eps_j^b(\tau),  \left[ \frac{1}{\check{\mu}_j} + \lambda_j^{-1} \eps_j^a(\tau) \right]^{-1} \right)(x)\\&= - \frac{1}{2} \log(2\pi) - \frac{1}{2} \log \left[ \frac{1}{\check{\mu}_j} + \lambda_j^{-1} \eps_j^a(\tau) \right]^{-1} \\ & \quad - \frac{1}{2} \left[ \frac{1}{\check{\mu}_j} + \lambda_j^{-1} \eps_j^a(\tau) \right] \left(x+ \left[ \frac{1}{\check{\mu}_j} + \lambda_j^{-1} \eps_j^a(\tau) \right]^{-1}  \lambda_j^{-1} \eps_j^b(\tau)\right)^2  ,
\end{align*}
and
\begin{align*}
	\log \mathcal{N}(0,\mu_j) (x)= -\frac{1}{2} \log(2\pi) - \frac{1}{2} \log \mu_j - \frac{1}{2} \frac{x^2}{\mu_j}.
\end{align*}
Hence
\begin{align*}
	& \mathbb{E}_{\check{\pi}_y^{(j)}}\left[ \log \mathcal{N}\left( -\left[ \frac{1}{\check{\mu}_j} + \lambda_j^{-1} \eps_j^a(\tau) \right]^{-1}  \lambda_j^{-1} \eps_j^b(\tau),  \left[ \frac{1}{\check{\mu}_j} + \lambda_j^{-1} \eps_j^a(\tau) \right]^{-1}\right) - \log \mathcal{N}(0,{\mu}_j)\right] \\
	& = \frac{1}{2} \log \left( \frac{1}{e^{-\tau} + (1-e^{-\tau}) p_j} + p_j^{-1} \eps_j^a(\tau)  \right) - \frac{1}{2} \mathbb{E}_{\check{\pi}_y^{(j)}} \left[ \frac{1}{\mu_j} \left( \frac{1}{e^{-\tau} + (1-e^{-\tau}) p_j}  + p_j^{-1} \eps_j^a(\tau) \right)  \right.  \\&\quad  \times \left.\left(x+ \left[ \frac{1}{e^{-\tau} + (1-e^{-\tau}) p_j} + p_j^{-1} \eps_j^a(\tau) \right]^{-1}  p_j^{-1} \eps_j^b(\tau)\right)^2 - \frac{x^2}{\mu_j} \right].
\end{align*}
We have
\begin{align*}
	&\frac{1}{2} \log \left( \frac{1}{e^{-\tau} + (1-e^{-\tau}) p_j} + p_j^{-1} \eps_j^a(\tau)  \right) = \frac{1}{2}\left((1-p_j)\tau  +  p_j^{-1} \eps_j^a (\tau)\right) +O(\tau^2),
\end{align*}
and
\begin{align*}
	&\mathbb{E}_{\check{\pi}_y^{(j)}} \left[\frac{1}{\mu_j} \left( \frac{1}{e^{-\tau} + (1-e^{-\tau}) p_j}  + p_j^{-1} \eps_j^a(\tau) \right) \right. \\
    &\qquad \quad \times \left. \left(x+ \left[ \frac{1}{e^{-\tau} + (1-e^{-\tau}) p_j} + p_j^{-1} \eps_j^a(\tau) \right]^{-1}  p_j^{-1} \eps_j^b(\tau)\right)^2 - \frac{x^2}{\mu_j} \right]
	\\
	&= \big[  (1-p_j)\tau + p_j^{-1} \eps_j^a(\tau)  \big] \mathbb{E}_{\check{\pi}_y^{(j)}}\left[ \frac{x^2}{\mu_j} \right]  +2  \lambda_j^{-1} \eps_j^b(\tau)  \mathbb{E}_{\check{\pi}_y^{(j)}}\left[ x\right] + \lambda_j^{-1} p_j^{-1} \eps_j^b(\tau)^2 \\
    &\quad \times \big[1-(1-p_j) \tau - p_j^{-1} \eps_j^a(\tau)\big] +O(\tau^2).
\end{align*}

\paragraph{Second term} 
We have
\[
\mathbb{E}\left[\exp\left(-\phi_j(\widetilde{X}_0)\right) \mid \widetilde{X}_\tau = x\right] = \int \exp(-\phi_j(z)) \frac{1}{\sqrt{2 \pi v_\tau}} \exp\left( - \frac{(z-m_\tau(x))^2}{2v_\tau} \right) dz,
\]
where
\begin{align*}
	m_\tau(x) & = \frac{e^{-\tau/2} \mu_j}{e^{-\tau} \mu_j + (1-e^{-\tau}) \lambda_j} x   = \left[ 1 +   \left( \frac{1}{2}-p_j \right)\tau + \left( \frac{p_j}{2} -\frac{1}{8} \right) \tau^2 \right] x  + O(\tau^3), 
\end{align*}
and
\begin{align*}
	\sqrt{v_\tau} &= \sqrt{\mu_j - \frac{e^{-\tau} \mu_j^2}{e^{-\tau} \mu_j + (1-e^{-\tau})\lambda_j}} =  \sqrt{\lambda_j\tau } \left(1-\frac{\tau}{4} - \frac{\tau^2}{32} + O(\tau^{5/2}) \right).
\end{align*}
By the change of variable $w = \frac{z-m_\tau(x)}{\sqrt{v_\tau}}$, we obtain
\begin{align*}
	 &\mathbb{E}\left[\exp\left(-\phi_j(\widetilde{X}_0)\right) \mid \widetilde{X}_\tau = x\right] 
     \\     
     & = \int \exp(-\phi_j(z)) \frac{1}{\sqrt{2 \pi v_\tau}} \exp\left( - \frac{(z-m_\tau(x))^2}{2v_\tau} \right) dz\\
	& =  \int \exp(-\phi_j \left(\sqrt{v_\tau} w + m_\tau(x)\right)) \frac{1}{\sqrt{2\pi}} \exp(-w^2/2) dw\\
	& =   \exp(-\phi_j(x)) \left[1 + \left\{\lambda_j \left[ \phi_j'(x)^2 - \phi_j''(x)\right] -\phi_j'(x) (1-2p_j)x \right\} \frac{\tau}{2} + O(\tau^{3/2})\right]
\end{align*}
where we used the Taylor expansion for $\exp(-\phi_j \left(\sqrt{v_\tau} w + m_\tau(x)\right))$ as $\sqrt{\tau} \to 0$
and 
\[
\int \frac{w}{\sqrt{2\pi}} \exp(-w^2/2) =0, \qquad \int \frac{w^2}{\sqrt{2\pi}} \exp(-w^2/2) =1.
\]
Hence
\begin{align*}
	& \mathbb{E}_{\check{\pi}_y^{(j)}}(\phi_j(x) - \check{\phi}_j(x)) \\
    & = \mathbb{E}_{\check{\pi}_y^{(j)}} \log \left[1 + \left\{\lambda_j \left[ \phi_j'(x)^2 - \phi_j''(x)\right] -\phi_j'(x) (1-2p_j)x \right\} \frac{\tau}{2} + O(\tau^{3/2})\right].
\end{align*}

\paragraph{Third term} For the unobserved modes $j>N$, we analyze
\begin{equation}
	\frac{\displaystyle \int e^{-\phi_j(z)} \mathcal{N}(0,\mu_j)(z)dz}{\displaystyle \int e^{-\check{\phi_j}(z) - \lambda_j^{-1} \eps_j^a(\tau) \phi_j(z)}  \mathcal{N}(-[\check{\mu}_j^{-1} + \lambda_j^{-1} \eps_j^a(\tau)]^{-1} \lambda_j^{-1} \eps_j^b(\tau), [\check{\mu}_j^{-1} + \lambda_j^{-1} \eps_j^a(\tau)]^{-1})(z)dz}.
	\label{eq:first-term}
\end{equation}
We use that
\[
\check{\phi}_j(z) = \phi_j(z) - \Big[ \lambda_j (\phi'_j(z)^2 - \phi''_j(z)) - \phi'_j(z) (1-2p_j) z\Big] \frac{\tau}{2} +O(\tau^{3/2}), 
\]
which implies
\begin{align*}
& e^{-\check{\phi_j}(z) - \lambda_j^{-1} \eps_j^a(\tau) \phi_j(z)} \\
& = e^{-\phi_j(z)} \Big( 1 + \Big[ \lambda_j (\phi'_j(z)^2 - \phi''_j(z)) - \phi'_j(z) (1-2p_j) z\Big] \frac{\tau}{2} -\lambda_j^{-1} \eps_j^a \phi_j(z) + O(\tau^{3/2}) \Big). 
\end{align*}
Now we consider the density of  $\mathcal{N}\big(-[\check{\mu}_j^{-1} + \lambda_j^{-1} \eps_j^a(\tau)]^{-1} \lambda_j^{-1} \eps_j^b(\tau), [\check{\mu}_j^{-1} + \lambda_j^{-1} \eps_j^a(\tau)]^{-1}\big)$:
\begin{equation}
	\frac{1}{\sqrt{2\pi [\check{\mu}_j^{-1} + \lambda_j^{-1} \eps_j^a(\tau)]^{-1}  }}
	\exp\left( - \frac{\Big(z+ [\check{\mu}_j^{-1} + \lambda_j^{-1} \eps_j^a(\tau)]^{-1}  \lambda_j^{-1} \eps_j^b(\tau)\Big)^2}{2 [\check{\mu}_j^{-1} + \lambda_j^{-1} \eps_j^a(\tau)]^{-1} } \right). 
	\label{eq:density}
\end{equation}
We have
\begin{equation}
	\frac{1}{\sqrt{2\pi [\check{\mu}_j^{-1} + \lambda_j^{-1} \eps_j^a(\tau)]^{-1}}} = \frac{1}{\sqrt{2 \pi \mu_j}} \Big[ 1 - \frac{1}{2} (1-p_j) \tau -\frac{1}{2} p_j^{-1} \eps_j^a(\tau) + O(\tau^2)\Big ].
	\label{eq:denominator}
\end{equation}
We now look at the exponent of \eqref{eq:density}. Its numerator reduces to
\[
(z+p_j^{-1} \eps_j^b(\tau))^2 - 2 (z+p_j^{-1} \eps_j^b(\tau)) ((1-p_j)\tau + p_j^{-1} \eps_j^a(\tau)) + O(\tau^2),
\]
while the reciprocal of its denominator  \eqref{eq:density} reduces to
\[
\frac{1}{2\mu_j} \Big[ 1+(1-p_j)\tau + p_j^{-1} \eps_j^a(\tau) + O(\tau^2) \Big]. 
\]
Then the exponent of \eqref{eq:density} can be expanded as
\begin{align*}
&-\frac{1}{2\mu_j} z^2 - \frac{1}{2\mu_j} p_j^{-2} \eps_j^b(\tau)^2 - z\lambda_j^{-1} \eps_j^b(\tau) \\
&+ \Bigg[ \frac{(z+p^{-1}_j \eps_j^b(\tau))}{\mu_j} + \frac{(z+p_j^{-1} \eps_j^b(\tau))^2}{2\mu_j}\Bigg ] ((1-p_j)\tau + p_j^{-1} \eps_j^a(\tau)) + O(\tau^2),
\end{align*}
and the exponential term in \eqref{eq:density} becomes
\begin{equation}
\begin{aligned}
	&\exp\left(-\frac{(z+p_j^{-1} \eps_j^b(\tau))^2}{2\mu_j}\right) \\
    &\times \Bigg[ 1 + \Bigg( \frac{z+p_j^{-1} \eps_j^b(\tau)}{\mu_j} + \frac{(z+p_j^{-1} \eps_j^b(\tau))^2}{2 \mu_j} \Bigg) ((1-p_j) \tau + p_j^{-1} \eps_j^a(\tau)) +O(\tau^2) \Bigg].
	\label{eq:density-2}
    \end{aligned}
\end{equation}
Putting together \eqref{eq:denominator} and \eqref{eq:density-2} we get that the Gaussian density \eqref{eq:density} is expanded as
\begin{align*}
&\mathcal{N}(-p_j^{-1} \eps_j^b(\tau), \mu_j)(z)\\ 
&\times  \Bigg[ 1 + \Bigg( \frac{z+p_j^{-1} \eps_j^b(\tau)}{\mu_j} +   \frac{(z+p_j^{-1} \eps_j^b(\tau))^2}{\mu_j} -\frac{1}{2}\Bigg) ((1-p_j)\tau + p_j^{-1} \eps_j^a(\tau)) + O(\tau^2) \Bigg].
\end{align*}
We can now expand for small $\tau$
\[
\left[ \int e^{-\check{\phi_j} - \lambda_j^{-1} \eps_j^a(\tau) \phi_j} d\mathcal{N}(-[\check{\mu}_j^{-1} + \lambda_j^{-1} \eps_j^a(\tau)]^{-1} \lambda_j^{-1} \eps_j^b(\tau), [\check{\mu}_j^{-1} + \lambda_j^{-1} \eps_j^a(\tau)]^{-1})\right]^{-1}. 
\]
Let
\[
\mathcal{Z}_j(\eps_j^b(\tau))=\int e^{-\phi_j(z)} \mathcal{N}(-p_j^{-1}\eps_j^b(\tau),\mu_j)(z) dz .
\]
We derive
\begin{align*}
	& \mathcal{Z}_j(\eps_j^b(\tau))^{-1} \Bigg\{ 1 - \mathcal{Z}_j(\eps_j^b(\tau))^{-1}  \\ 
	& \times \int  e^{-\phi_j(z)} \mathcal{N}(-p_j^{-1}\eps_j^b(\tau),\mu_j)(z) \Bigg[\Big(\lambda_j (\phi'(z)^2 -\phi''_j(z)) - \phi'_j(z) (1-2p_j) z \Big) \frac{\tau}{2} \\ &- \lambda_j^{-1} \eps_j^a(\tau) \phi_j(z)  + 
	\Bigg( \frac{z+p_j^{-1} \eps_j^b(\tau)}{\mu_j} +   \frac{(z+p_j^{-1} \eps_j^b(\tau))^2}{\mu_j} -\frac{1}{2}\Bigg) \bigg((1-p_j)\tau + p_j^{-1} \eps_j^a(\tau)\bigg)\Bigg] dz\Bigg\} \\ &+ O(\tau^{3/2}).
\end{align*}
Then \eqref{eq:first-term} can be expanded as
\begin{align*}
	& \log \mathcal{Z}_j(0) - \log \mathcal{Z}_j(\eps_j^b(\tau))  \\ 
	& - \mathcal{Z}_j(\eps_j^b(\tau))^{-1}\int e^{-\phi_j(z)} \mathcal{N}(-p_j^{-1}\eps_j^b(\tau),\mu_j)(z) \Bigg[\Big(\lambda_j (\phi'(z)^2 -\phi''_j(z)) \\&- \phi'_j(z) (1-2p_j) z \Big) \frac{\tau}{2}- \lambda_j^{-1} \eps_j^a(\tau) \phi_j(z)  + 
	\Bigg( \frac{z+p_j^{-1} \eps_j^b(\tau)}{\mu_j} +   \frac{(z+p_j^{-1} \eps_j^b(\tau))^2}{\mu_j} -\frac{1}{2}\Bigg)\\
    &\times ((1+p_j)\tau + p_j^{-1} \eps_j^a(\tau))\Bigg] dz + O(\tau^{3/2}) .
\end{align*}
Now let
\[
\mathcal{Z}_j(\eps_j^b(\tau), A_{jj},y_j) = \int e^{-\phi_j(z)-\frac{1}{2 \sigma^2} \left[ A_{jj} z - y_j\right]^2} \mathcal{N}(-p_j^{-1}\eps_j^b(\tau),\mu_j)(z) dz  .
\]
For the observed modes $j\leq N$, we get
\begin{align*}
	& \log   \mathcal{Z}_j(0, A_{jj},y_j)  - \log \mathcal{Z}_j(\eps_j^b(\tau), A_{jj},y_j)  \\ 
	& - \mathcal{Z}_j(\eps_j^b(\tau), A_{jj},y_j)^{-1}  \int  e^{-\phi_j(z)-\frac{1}{2 \sigma^2} \left[ A_{jj} z - y_j\right]^2} \mathcal{N}(-p_j^{-1}\eps_j^b(\tau),\mu_j)(z) \Bigg[ \Big(\lambda_j (\phi'(z)^2 -\phi''_j(z)) \\
    & - \phi'_j(z) (1-2p_j) z \Big) \frac{\tau}{2} - \lambda_j^{-1} \eps_j^a(\tau) \phi_j(z)  + 
	\Bigg( \frac{z+p_j^{-1} \eps_j^b(\tau)}{\mu_j} +   \frac{(z+p_j^{-1} \eps_j^b(\tau))^2}{\mu_j} -\frac{1}{2}\Bigg) \\
    &\times ((1-p_j)\tau + p_j^{-1} \eps_j^a(\tau))\Bigg] dz + O(\tau^{3/2}) .
\end{align*}	
	    
\end{proof}

\begin{remark}
If $\phi_j$ is smooth and $\eps_j^b(\tau)=O(1)$, then $E_j(\tau) \to 0$ as $\tau \to 0$. 
\end{remark}

\section{Illustrations: Additional Details}\label{sec:4:appendix}

Here we provide additional details on the theoretical setup underlying the illustrations. All illustrations were generated on Google Colab (13 GB of RAM), and all code executions took less than one minute\footnote{Code to reproduce results can be found at \href{https://github.com/balorenz1/SGM-Inf-Langevin}{https://github.com/balorenz1/SGM-Inf-Langevin}}.

\subsection{Recovering the KL coefficients of the Brownian sheet}
The Brownian sheet $B(x_1,x_2)$ is a Gaussian process with zero mean 
 and covariance
 \[
 \text{Cov}(B(x_1,x_2), B(y_1,y_2))=  \min(x_1,y_1) \min(x_2,y_2).
 \]
Its Karhunen-Lo\`eve expansion  \cite{wang2008karhunen} is
 \[
 B(x_1,x_2) = \sum_{j,k} \phi_{j,k} (x_1,x_2) \eta_{j,k}, \qquad (x_1,x_2)\in [0,1]^2,
 \]
 where $\eta_{j,k} \sim \mathcal{N}(0,\mu_{j,k})$ are independent Gaussian random variables, and
\[
\phi_{j,k}(x_1,x_2) = 2 \sin \left(\pi \left( j-\frac{1}{2} \right)  x_1\right) \sin \left(\pi\left( k-\frac{1}{2} \right)x_2\right),
\] 
\[
\mu_{j,k} =\left(\left(j-\frac{1}{2}\right) \pi\left(k-\frac{1}{2}\right)\pi\right)^{-2}.
\] 
 
 In Section \ref{sec:illustrations}, we truncate the KL expansion after $N$ modes
 \[
 B^N(x_1,x_2) = \sum_{j,k=1}^N \phi_{j,k}(x_1,x_2) \eta_{j,k},
 \]
and consider the inverse problem of recovering  the first $N^2$ coefficients from noisy observations corresponding to the first $M^2\leq N^2$ modes
\[ 
y_{j,k} = \tilde{\eta}_{j,k} + \eps_{j,k}, \qquad j,k\leq M,
\] 
where the prior is $\tilde{\eta}_{j,k}  \sim \mathcal{N}(0,\mu_{j,k})$ and the noise $\eps_{j,k} \sim \mathcal{N}(0,\sigma^2)$. This setup satisfies the assumptions of our theory, since the prior diagonal in the KL basis $(\phi_{j,k})$ and the forward map is simply the projection onto these modes, so that \[ 
A_{j,k,j',k'}=\delta_{j,j'} \delta_{k,k'},\qquad j,j', k', k\leq M ,\]
and zero otherwise. As a result, the posterior for each coefficient remains Gaussian 
\[
\widetilde{\eta}_{j,k} \mid y_{j,k} \sim \pi^{(j,k)}_{y_{j,k}} = \mathcal{N}(m_{j,k}, v_{j,k}),
\]
with, for $j,k \leq M$,
\[
v_{j,k} = \left( \mu_{j,k}^{-1} + \sigma^{-2} \right)^{-1} = \frac{\mu_{j,k} \sigma^2}{\mu_{j,k} + \sigma^2}, \qquad
m_{j,k} = \frac{\mu_{j,k}}{\mu_{j,k} + \sigma^2} \, y_{j,k},
\]
and for $j>M$ or $k>M$ (unobserved modes) the posterior simply coincides with the prior, $v_{j,k} =  \mu_{j,k}$, $m_{j,k} = 0$.

\paragraph{Experimental details } In Figure \ref{fig:brownian-sheet}, within the theoretical setup described above, we set the noise level $\sigma=10^{-2}$, chose $N=200$, and varied the number of observed modes $M^2=75^2,200^2$ to illustrate the discretization-invariance of the preconditioned Langevin sampler. This is confirmed by the small errors reported in the fourth column of Figure \ref{fig:brownian-sheet}. The preconditioned Langevin dynamics, using the preconditioner $C_M = \operatorname{Diag}\limits_{1\leq j,k \leq M}(\lambda_{j,k})$, $\lambda_{j,k} = [\mu_{j,k}^{-1} +\sigma^{-2}]^{-1}$, was run for $5\cdot10^3$ iterations with a fixed step-size of $5\cdot 10^{-1}$. We assumed access to the exact score function, i.e., $\phi=\tau =0$ in \eqref{def:score-non-gaussian}.

\subsection{Inverse source problem for the heat equation}
Let $\Omega = [0,1]^2 \subseteq \mathbb{R}^2$. Consider $u:\Omega \times[0,T] \to \mathbb{R}$ solving the heat equation
\[
\begin{cases}
\partial_t u (x,t) = \Delta u(x,t), & (x,t)\in \Omega \times (0,T], \\
u(x,0) = g(x), & x \in \Omega , \\
u(x,t) =0, & x \in \partial \Omega \times (0,T].
\end{cases}
\]
Set $u(x_1,x_2;t) \;=\; \sum_{j,k=1}^\infty  u_{j,k}(t)\,\psi_{j,k}(x_1,x_2)$,
where $(\psi_{j,k},\zeta_{j,k})$ are the Dirichlet eigenpairs of $-\Delta$ on $[0,1]^2$:
\[
\begin{cases}
-\Delta \psi_{j,k}(x_1,x_2) = \zeta_{j,k}\,\psi_{j,k}(x_1,x_2), & (x_1,x_2)\in[0,1]^2,\\
\psi_{j,k}|_{\partial [0,1]^2} =0 .
\end{cases}
\]
We have
\[
\psi_{j,k}(x_1,x_2) = 2\sin(j\pi x_1)\sin(k\pi x_2),
\qquad
\zeta_{j,k} = \pi^2(j^2 + k^2).
\]
The coefficients evolve as
\[
u_{j,k}(t) = e^{-\zeta_{j,k}t}\,g_{j,k},
\qquad
g_{j,k} = \langle g,\phi_{j,k}\rangle.
\]
In Section \ref{sec:illustrations}, we consider the the so-called {backward heat equation}---the ill-posed inverse problem of recovering the initial condition $g$  from noisy measurements of $u(\cdot,T)$ inside $\Omega$
\[
y_{j,k} = e^{-\zeta_{j,k}T}\,g_{j,k} +\eps_{j,k}, \qquad j,k \leq M,
\]
by adopting a Bayesian approach \cite{stuart2010inverse}. We assume a Gaussian prior
$g_{j,k}\sim\mathcal{N}(0,e^{-\beta \zeta_{j,k} })$
 and independent Gaussian noise $\eps_{j,k}\sim\mathcal{N}(0,\sigma^2)$. The forward map is diagonal in $(\psi_{j,k})$, with
\[
A_{j,k,j',k'} = e^{-\zeta_{j,k}T} \delta_{j,j'} \delta_{k,k'}, \qquad j,j',k,k' \leq M.
\]
As a result, the posterior for each coefficient remains Gaussian
\[
g_{j,k}\mid y_{j,k}
\sim \mathcal{N}(m_{j,k},v_{j,k}),
\]
with, for $j,k\leq M$,
\[
v_{j,k} = \frac{e^{-\beta \zeta_{j,k} }\sigma^2 }{e^{-(\beta+2T) \zeta_{j,k} } + \sigma^2 } , \qquad 
m_{j,k} = \frac{\mu_{j,k} }{e^{-(\beta+2T) \zeta_{j,k} }  + \sigma^2 } e^{- \zeta_{j,k} T }  y_{j,k}.
\]
For $j>M$ or $k>M$ (unobserved modes), the posterior simply coincides with the prior.

\paragraph{Experimental details} In Figure \ref{fig:heat-ip}, within the theoretical setup described above, we fixed the noise level at $\sigma=5 \cdot 10^{-3}$, chose $M=15$ (i.e. $225$ observed modes), and set $T=0.1$. We then ran the preconditioned Langevin sampler---with the optimal preconditioner $C$ from Theorem \ref{thm:preconditioner}---using the exact score function perturbed by a relative error $\eps_j^a \sim \mathcal{N}(0,0.1^2)$, scaled by $\tau=10^{-3}$ in the top row of Figure \ref{fig:heat-ip} and by $10^{-1}$ in its bottom row,  and with zero  bias (i.e. $\eps^b_j = 0$) to simulate a learned score. This sampler was run for $5\cdot 10^3$ iterations with a fixed step-size of $10^{-2}$. For comparison, we also executed the vanilla Langevin sampler for $1.5\cdot 10^4$ iterations with a fixed step-size of $10^{-6}$.
To further illustrate the quality of our preconditioned posterior samples, Figure \ref{fig:UQ} below shows uncertainty quantification for Figure \ref{fig:heat-ip}.  For the first $35$ modes, we plot the conditional posterior mean (red),  the 95\% credible interval (orange shading), and the ground truth (dotted black line).

\begin{figure}[ht]
    \centering
    \begin{subfigure}{0.49\linewidth}
        \centering
        \includegraphics[width=\linewidth]{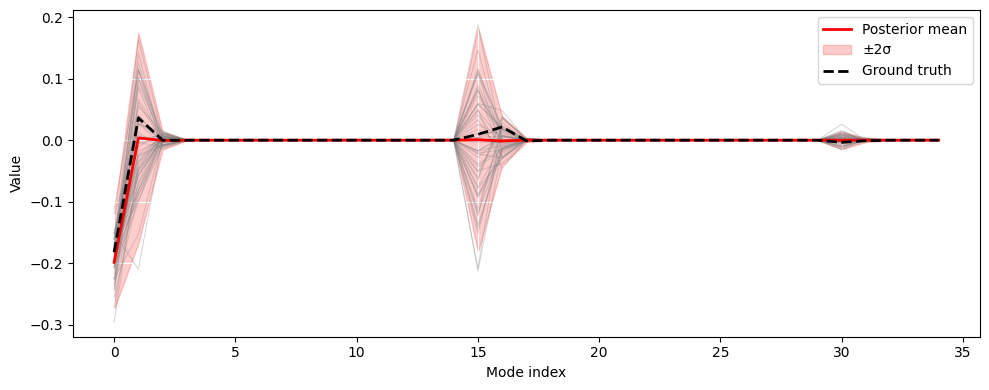}
    \end{subfigure}
    \hfill
    \begin{subfigure}{0.49\linewidth}
        \centering
        \includegraphics[width=\linewidth]{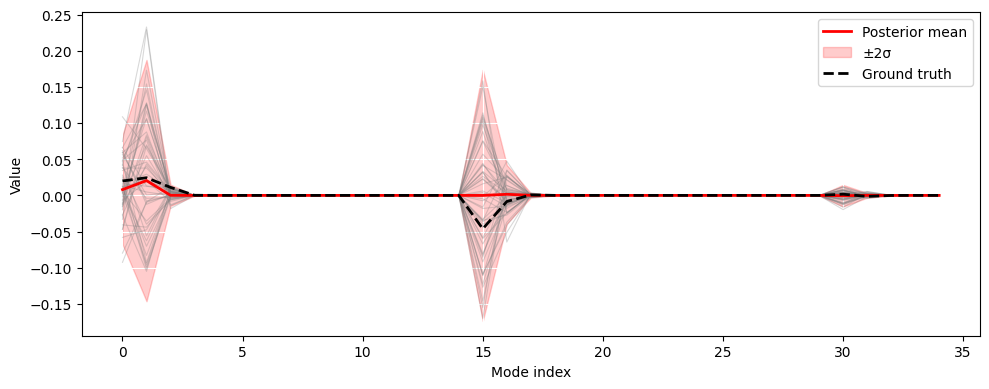}
    \end{subfigure}
   \caption{Uncertainty quantification for preconditioned posterior sampling. Left: $\tau=10^{-3}$. Right: $\tau=10^{-1}$}
    \label{fig:UQ}
\end{figure}

\end{document}